\documentclass[11pt,a4paper]{article}

\usepackage[margin=2.5cm]{geometry}
\usepackage{amsmath,amssymb,amsthm}
\usepackage{booktabs}
\usepackage{graphicx}
\usepackage{hyperref}
\usepackage{xcolor}
\usepackage{algorithm}
\usepackage{algpseudocode}
\usepackage{multirow}
\usepackage{array}
\usepackage{enumitem}
\usepackage{microtype}
\usepackage{tikz}
\usetikzlibrary{shapes.geometric,arrows.meta,positioning,fit,backgrounds,calc}

\hypersetup{
  colorlinks=true,
  linkcolor=blue!60!black,
  citecolor=blue!60!black,
  urlcolor=blue!60!black,
}

\theoremstyle{definition}
\newtheorem{definition}{Definition}[section]
\theoremstyle{plain}
\newtheorem{theorem}[definition]{Theorem}
\newtheorem{proposition}[definition]{Proposition}

\theoremstyle{remark}

\title{\textbf{Quantum-Inspired Trace-Augmented Evidence Selection for Reasoning over Structured Hypothesis Spaces}}

\author{
 Laura Wynter\thanks{Corresponding author: lwynter@smu.edu.sg}\, , Nirvik Sahoo, Paul Griffin \\ School of Computing and Information Systems\\
 Singapore Management University\\Singapore
}

\date{}

\begin{document}

\maketitle

\begin{abstract}
Large language models (LLMs) now solve a wide range of expert-level exams at
or above human level, yet remain brittle on specialised, evidence-intensive
domains such as law. On these tasks, errors arise not only from gaps in world
knowledge but also from subtle distinctions between pieces of evidence and
inconsistent use of supporting evidence. The most common aggregator over
sampled chain-of-thought (CoT) traces, majority vote, returns the most popular
answer regardless of whether its evidence is actually strongest.
We propose to treat the selection of CoT reasoning fragments into a set of
evidence as an explicit combinatorial optimisation problem, allowing
well-supported but minority hypotheses to override noisy majorities, and to
evaluate the approach on legal-reasoning benchmarks that are particularly
sensitive to evidence quality.
We introduce EP-HUBO (Evidence Pool Higher-Order Binary Optimisation), which
(i)~generates multiple CoT traces with a small local model, (ii)~parses
fragments into per-hypothesis evidence pools, (iii)~solves a higher-order
unconstrained binary optimisation per pool with quality-derived weights
(relevance, specificity, distinctiveness), and (iv)~delegates a single
adjudication call per question to a frontier model. We evaluate EP-HUBO on
two evidence-intensive legal benchmarks (MMLU-Pro law and LEXam) using both
simulated annealing on classical hardware and the Dirac-3 photonic
entropy-quantum machine from Quantum Computing Inc.
EP-HUBO beats majority vote by $+12.6$~pp on MMLU-Pro law and by up to
$+23.2$~pp on LEXam with a strong frontier adjudicator, and beats zero-shot
frontier adjudication by $+1.5$~pp on MMLU-Pro law and up to $+5.1$~pp on
LEXam. On LEXam, zero-shot Claude Sonnet~4.6 exhibits a severe position bias,
choosing option ``E'' on 87.7\% of questions; EP-HUBO-selected evidence
reduces the bias and yields up to $+20.2$~pp over zero-shot Sonnet at HUBO
precision $92.0\%$. Quantum solutions on Dirac-3 are competitive with
classical simulated annealing.
HUBO-style optimisation gives a principled way to aggregate reasoning
fragments while preserving minority-but-correct hypotheses, and is most
valuable in low-contamination domains where frontier models have not already
absorbed the benchmark material. 
\end{abstract}

\section{Introduction}

Large language models (LLMs) now solve a wide range of expert-level
exams at or above human level, yet they remain brittle on specialised,
evidence-intensive domains such as law.
On these tasks, errors arise not only from gaps in world
knowledge, but also from  subtle distinctions between pieces of evidence
and inconsistent use of supporting evidence.  A common approach is
to sample many LLM responses and aggregate the answers.  The
aggregator most commonly used, majority
vote~\cite{wang2022selfconsistency}, returns the most
frequent answer regardless of whether its evidence is actually
strongest. 

We introduce EP-HUBO: Evidence Pool Higher-Order Binary
Optimisation,  that treats evidence selection in Chain-of-Thought  (CoT) reasoning traces as a
structured combinatorial problem. 
Unlike self-consistency and majority-vote schemes,
EP-HUBO does not reward popularity: reasoning fragment weights are derived from relevance,
specificity, and distinctiveness, allowing well-supported but minority hypotheses to override
noisy majorities. 
Our approach involves using the optimisation problem to identify the strongest CoT reasoning evidences  from per-hypothesis evidence pools over the structured hypothesis space.
EP-HUBO is a quantum-inspired formulation that allows solving the optimisation problem with
either a classical or a quantum computer. 

EP-HUBO is characterised by: (i) local open-weight trace generation, (ii) per-hypothesis evidence pools, (iii) HUBO with quality-derived rather than popularity-derived weights, and (iv) a single frontier adjudication call per question.
The approach has four phases.  First, for cost efficiency, we use a smaller 
 local  model to generate multiple CoT traces per question. Then, also using the smaller local model along with deterministic rules, fragments from the traces are parsed into answer-specific
      evidence pools.   This step relies on the hypothesis space being well-structured, so that  the answers can be  partitioned into a discrete set
of candidate hypotheses.
Third, we formulate and solve a higher-order unconstrained binary optimisation (HUBO) problem; the optimisation problem serves to select a  subset of fragments that supports strongly each answer pool. Lastly, 
 a single call to a larger, frontier model adjudicates
      over the options using the optimised evidence. 

The legal domain presents an excellent area of application for EP-HUBO. Legal decisions rely on compiling separate, often independent, pieces of evidence to support a conclusion. Contrary to mathematical proofs, the evidences need not follow a specific order. As such, they may be drawn from multiple, different CoT reasoning traces. In addition, if a reasoning trace misses key pieces of evidence, the legal reasoning may collapse, which motivates generating multiple CoT traces. 
We thus evaluate EP-HUBO on two challenging legal reasoning benchmarks:
the law subset of MMLU-Pro~\cite{wang2024mmlupro} 
and the LEXam benchmark~\cite{lexam2024} of Swiss and
international-law questions. We evaluate EP-HUBO on both classical computers and a quantum computer; using classical computers we solve the optimisation problem via simulated annealing (SA), and we solve the quantum formulation on the Dirac-3 photonic entropy quantum machine from Quantum Computing Inc. (QCi) ~\cite{Nguyen2025EntropyComputing}. Our results show that EP-HUBO substantially improves over majority vote on both
benchmarks.  Beyond accuracy, EP-HUBO  exposes and  mitigates adjudicator
bias. On LEXam, one of the frontier LLMs exhibits a severe
 bias, choosing one of the answer options  on 87.7\% of all questions. However, using HUBO-selected evidence reduces the bias and yields an $+11.4$~pp gain over the
zero-shot LLM.  Using Dirac-3 to solve the HUBO optimisation gives comparable results to simulated annealing on classical computers.

 This paper thus makes the following contributions.

\begin{enumerate}\setlength\itemsep{2pt}
  \item Our evidence-pool HUBO framework. 
       We define the trace-augmented
       problem formulation with evidence pools, and a higher-order binary optimisation problem
      for evidence selection, which is amenable to both classical and quantum computers.
  \item A theoretical analysis of EP-HUBO.       
  \item An empirical study using both classical and a photonic quantum computer on  MMLU-Pro law and
          LEXam
        Swiss and international-law benchmarks using two open-weight smaller
     LLM  trace generators. 
\end{enumerate}

The paper is organised as follows. Section~\ref{sec:related} discusses related work including self-consistency, combinatorial
optimisation for NLP, quantum-inspired AI, and benchmark contamination.
Section~\ref{sec:framework} introduces our  framework and
algorithm.  Section~\ref{sec:theory} provides our theoretical
analysis.  Section~\ref{sec:results} describes our experimental setup and reports our results.
Section~\ref{sec:ablation} presents an ablation study that isolates the contribution of EP-HUBO's method
components and the effect of adjudicator model strength.
Section~\ref{sec:conclusion} concludes along with a discussion of the implications and limitations of our work.
All scripts and traces are released alongside this paper.

\section{Related Work}
\label{sec:related}
First, we discuss prior work   on CoT reasoning traces as well as other combinatorial and quantum approaches.

\paragraph{Self-Consistency and Majority Vote}

Self-consistency~\cite{wang2022selfconsistency} is an approach that generates $N$ independent CoT
traces for a single task. It is used in general with majority vote, that  selects the most-common final answer.  While it improves over greedy
decoding, it  is limited by the majority signal: if the correct answer is in the
minority of traces self-consistency cannot recover it.  Recent theoretical analysis
by Feng et al.~\cite{feng2025optimalsc} shows that its accuracy
follows a power-law in the number of samples; they introduce an adaptive
variant (Blend-ASC) that achieves the same accuracy with 6.8$\times$ fewer
samples by detecting when additional traces yield diminishing returns.
Kang et al.~\cite{kang2025selfcertainty} propose \emph{self-certainty}, a
reward-model-free metric derived from the model's own output distribution,
enabling sample-efficient best-of-$N$ selection without external verifiers.

\paragraph{Reasoning verification and generative verifiers.}
Beyond self-consistency, Yao et
al.~\cite{yao2023tot} introduce Tree-of-Thought search over partial
solutions; Saunders et al.~\cite{saunders2022selfcritique} show that
language models can be fine-tuned to critique their own intermediate
steps; Welleck et al.~\cite{welleck2022generating} train a separate
generative verifier whose score is used to re-rank candidate answers.
These methods improve over self-consistency by introducing a
verification stage, but the verification scope is typically the
\emph{full reasoning trace}, not the individual evidence fragments
within it.  EP-HUBO can be seen as taking the verification idea one
level finer: rather than ranking complete traces, we score and select
fragments and let the frontier model verify the resulting evidence
ensemble.

\paragraph{Combinatorial structure in NLP decoding.}
Beam search is itself a discrete
optimisation: each step selects from a candidate set under a
log-probability objective.  More structured combinatorial decoding has
been studied at least since Roth and Yih~\cite{rothyih2004ilp}, who
formulated semantic-role labelling as an integer linear program, and
Riedel and Clarke~\cite{riedelclarke2006ilp} who used ILP for
dependency parsing.  Modern variants use SAT solvers for constrained
generation~\cite{poesia2022synchromesh} or differentiable relaxations
of structured prediction objectives~\cite{niculae2018sparsemap}.
EP-HUBO continues this tradition by treating evidence-fragment
selection as a higher-order pseudo-Boolean optimisation, with the
distinguishing feature that the objective coefficients are produced by
a learned model rather than designed by hand.

\paragraph{Scaling Test-Time Compute and Reinforcement-Learned Reasoning}

A parallel line of research improves reasoning by scaling inference-time
computation.  DeepSeek-R1~\cite{deepseekr1}  demonstrated that reinforcement
learning without supervised chain-of-thought annotation induces emergent
behaviours---self-reflection, verification, and strategy adaptation---achieving
84.0\% on MMLU-Pro.  Yang et
al.~\cite{yang2025tops} show that na\"ively extending CoT length can
\emph{hurt} accuracy on certain problem types and propose a
\emph{Thinking-Optimal Scaling} strategy that lets models self-select minimal
reasoning length per problem.  These findings motivate our use of bounded number of 
structured traces rather than unbounded chain-of-thought.

\paragraph{Multi-LLM Aggregation and Mixture of Agents}

Wang et al.~\cite{wang2024moa} propose Mixture-of-Agents (MoA), a layered
architecture in which each agent refines outputs from all prior agents,
achieving substantial gains over single-model baselines on open-ended
generation tasks.  A follow-up by Li et al.~\cite{li2025selfmoa} shows that
\emph{Self-MoA}---aggregating multiple outputs from the single strongest
model---outperforms diverse multi-model mixtures on AlpacaEval~2.0, suggesting
that answer diversity from one capable model is more valuable than architectural
diversity across weaker models.  Ashiga et al.~\cite{ashiga2025ensemblesurvey}
survey seven ensemble paradigms for LLMs (weight merging, mixture-of-experts,
output ensembling, routing, cascading) and find that output-level ensembles
with voting are competitive with more complex fusion strategies when the
individual models are sufficiently capable.   EP-HUBO on the other hand
aggregates
\emph{evidence fragments} from multiple traces of a \emph{single model}, then
delegates the final synthesis to a stronger frontier model. Note that EP-HUBO could easily be extended to select from traces across multiple LLMs.

\paragraph{Combinatorial Optimisation of Reasoning Fragments}
The idea of casting reasoning-fragment selection as a combinatorial optimisation
problem was introduced by Esencan et al.~\cite{cr2024qubo}, who map LLM-generated
candidate reasons onto a quadratic QUBO problem to select an optimal evidence subset for
chain-of-thought prompting, benchmarked against majority voting.   Zhang et
al.~\cite{zhang2025llmqubo} extend this paradigm to an end-to-end framework
(LLM-QUBO) in which an LLM automatically parses natural-language problem
descriptions and generates QUBO formulations, integrated with a hybrid
quantum-classical Benders' decomposition. Relative to Esencan et al. and LLM-QUBO, our EP-HUBO is (i) per-hypothesis, (ii) uses small local models for both trace generation and scoring, and (iii) is evaluated systematically on the legal domain as it presents an ideal application for optimally selecting and combining CoT reasoning traces.

\paragraph{QCR-LLM}

QCR-LLM~\cite{qcr2025} extends the
formulation from a quadratic QUBO to HUBO, with third-order interactions, and applies it  to
LLM reasoning traces.  They extract atomic reasoning fragments from
$N=20$ frontier-model CoT traces per question  and encode the traces  as binary variables
$x_i \in \{0,1\}$.  The HUBO energy function is defined as:
\begin{equation}
  H(\mathbf{x}) = \sum_i w_i x_i
                + \sum_{i<j} w_{ij} x_i x_j
                + \sum_{i<j<k} w_{ijk} x_i x_j x_k
  \label{eq:hubo}
\end{equation}
where 1-body coefficients encode fragment \emph{popularity} (fraction of traces
containing the fragment):
\begin{equation}
  w_i = -\mu p_i + \alpha \cdot p_i(1-p_i)
  \label{eq:1body_qcr}
\end{equation}
and pairwise/triplet coefficients encode \emph{statistical co-occurrence}
corrected for a semantic similarity penalty.  The low-energy subset (bottom
25\textsuperscript{th} percentile) is selected and passed to a frontier model
for final inference.  Their results on BIG-Bench Extra Hard (BBEH) show gains of up to 9~pp
over GPT-4o, DeepSeek R1, and o3-high, with $\approx$5$\times$ better energy
efficiency than o3-high.

QCR-LLM is an important first step toward HUBO-based combinatorial reasoning in LLMs, but several aspects of the formulation merit further scrutiny.
(1) Trace generation cost: QCR-LLM uses frontier models (GPT-4o,
DeepSeek) to generate each of their 20 traces for each question.  For large-scale deployment over a
full benchmark (e.g.\ 12,032 MMLU-Pro questions) this incurs substantial API
cost and largely defeats the purpose of their subsequent  optimisation step. 
(2) MV-correlated weights: QCM-LLM selection is based on fragment popularity $p_i$ (eq.~\ref{eq:1body_qcr}), which
is directly proportional to the fraction of traces endorsing that fragment.
Since traces endorse fragments that support the majority-vote answer,
the QCM-LLM 1-body term implicitly replicates the MV signal. (3)  Missing baseline:  QCR-LLM reports gains against frontier
 baselines (GPT-4o, o3-high) but  does not report
majority vote across its own frontier-LLM-generated 20-trace ensemble. 

While QCR-LLM applies HUBO to a single fragment set,  our EP-HUBO selects
separately from each candidate-answer pool  then provides  selected fragments
from all pools  to a frontier model for adjudication.  Our fragments are generated by local, open-weight models, and EP-HUBO uses a frontier model only once per question for adjudication across HUBO-selected evidences.
 Our EP-HUBO optimiser  selects the best evidence \emph{for each hypothesis}
        independently, rather than selecting a consensus that may suppress
        minority-but-correct reasoning.

\paragraph{Quantum and Classical Annealing for Combinatorial AI Tasks}

 Pomeroy et al.~\cite{pomeroy2025qannealml} reformulate feature
selection, instance selection, and clustering as QUBO problems and show that
D-Wave quantum annealing matches or exceeds classical SA on standard ML
benchmarks, even on current noisy hardware.  Nausheen et
al.~\cite{nausheen2025qnlpsurvey} survey quantum natural language processing
approaches---from quantum encoding of word embeddings to task-specific
quantum circuits---identifying QUBO-based combinatorial reasoning as one of
the most practically mature quantum-NLP interfaces available today.
The standard HUBO‑to‑QUBO reduction proceeds by quadratization of cubic terms; on D‑Wave quantum hardware this induces a polynomial blowup in the number of qubits. Hardware‑aware HUBO solvers such as  the BF‑DCQO procedure used by QCR‑LLM on IBM gate‑model devices operate directly on the higher‑order formulation. In our work, Phase 3 is solved using both classical simulated annealing and the Dirac-3 photonic quantum machine. Dirac-3 allows us to work directly with the same higher-order HUBO-level formulation, avoiding the auxiliary-variable quadratization that would typically be required for direct implementation on D-Wave quantum annealers, and reducing the reformulation or circuit-overhead burden that often arises in gate-based QAOA approaches.

\section{The EP-HUBO Framework}
\label{sec:framework}

\subsection{Problem Formulation}
\label{sec:problem}

We formalise the trace-augmented  question-answering
setting that EP-HUBO addresses. Table \ref{tab:notation} summarises the notation used throughout the
paper.

\begin{table}[h]
\centering
\small
\begin{tabular}{lp{10cm}}
\toprule
Symbol & Meaning \\
\midrule
$q,\ a^*(q)$ & Question; gold answer. \\
$\mathcal{A}, L$ & Option set, number of options. \\
$\mathcal{M}_L,\ \mathcal{M}_F$ & Local model (Phase 1--2) and frontier model (Phase 4). \\
$\mathcal{T}(q),\ N$ & Set of traces for $q$; traces per question. \\
$\tau = (r,\hat{a})$ & Single trace: reasoning text and extracted answer. \\
$P_\ell(q)$ & Evidence pool for option $a_\ell$ (fragments from traces with $\hat{a}=a_\ell$). \\
$\mathcal{L}(q)$ & Set of populated pools (with $\ge \tau_{\min}=3$ traces). \\
$\mathbf{x}_\ell \in \{0,1\}^{n_\ell}$ & Binary selection indicator on $P_\ell$, $\|\mathbf{x}_\ell\|_1=K_\ell$. \\
$H_\ell(\mathbf{x}_\ell)$ & Per-pool HUBO energy (Eq.~\ref{eq:hubo}). \\
$S_\ell$ & Selected fragment subset for pool $\ell$. \\
$\hat{a}_H(q),\ \hat{a}_{ZS}(q),\ \hat{a}_{MV}(q)$ & EP-HUBO, zero-shot, majority-vote predictions. \\
$\mathrm{prec}_H$ & HUBO precision over ZS (Eq.~\ref{eq:hubo_precision_def}). \\
$\alpha,\beta,\gamma$ & 1-body coefficients (relevance, specificity, distinctiveness). \\
$\lambda_{\text{supp}}, \lambda_{\text{contra}}, \lambda_{\text{coh}}, \lambda_{\text{und}}$ & Pairwise/triplet coefficients. \\
\bottomrule
\end{tabular}
\caption{Notation used throughout the paper.}
\label{tab:notation}
\end{table}

\begin{definition}[Trace-augmented instance]
\label{def:mcqa}
A \emph{trace-augmented  instance} is a tuple
$\mathcal{I} = (q, \mathcal{A}, \mathcal{M}_L, \mathcal{M}_F, N)$ where
$q$ is a natural-language question, $\mathcal{A} = \{a_1, \dots, a_L\}$
is a finite set of answer options, $\mathcal{M}_L$ is a local language model used for trace
generation and weight scoring, $\mathcal{M}_F$ is a frontier model used
for adjudication, and $N$ is the number of independent traces generated
by $\mathcal{M}_L$ per question.  The gold answer is denoted $a^*(q)$.
\end{definition}

\begin{definition}[Trace, fragment, answer label]
\label{def:trace}
A \emph{trace} for question $q$ is a pair $\tau = (r, \hat{a})$ where
$r$ is a chain-of-thought text and $\hat{a} \in \mathcal{A}$ is the
trace's extracted final answer.  An \emph{extraction operator}
$\mathrm{frag}: r \mapsto \{f_1, \dots, f_m\}$ produces a set of
candidate evidence fragments from the reasoning text.  We write
$\mathcal{T}(q) = \{\tau_1, \dots, \tau_N\}$ for the trace set generated
for $q$.
\end{definition}

\begin{definition}[Evidence pool]
\label{def:pool}
For each answer option $a_\ell \in \mathcal{A}$, the \emph{evidence
pool} $P_\ell(q)$ is the multiset of fragments extracted from traces
that concluded with answer $a_\ell$:
$$
P_\ell(q) \;=\; \bigcup_{\tau \in \mathcal{T}(q) : \hat{a}(\tau) = a_\ell} \mathrm{frag}(r(\tau)).
$$
A pool is \emph{populated} if $|\mathcal{T}_\ell(q)| \ge \tau_{\min}$
for a minimum trace count $\tau_{\min}$; we
let $\mathcal{L}(q) \subseteq \mathcal{A}$ denote the set of options
with populated pools. We set $\tau_{\min} = 3$ in our experiments.
\end{definition}

\begin{definition}[HUBO selection]
\label{def:hubo}
For each evidence pool $P_\ell(q)$ with $|P_\ell| = n_\ell$
fragments, the EP-HUBO optimisation problem determines the  optimal binary vector
$\mathbf{x}_\ell \in \{0,1\}^{n_\ell}$ that minimises the following higher-order  energy objective function, subject to $\|\mathbf{x}_\ell\|_1 = K_\ell$, where $K_\ell$  is a cardinality hyperparameter chosen based on fragment diversity.
\begin{equation}
H_\ell(\mathbf{x}_\ell)
\;=\;
\sum_{i \in P_\ell} w_i^{(1)} x_i
\;+\;
\sum_{i < j \in P_\ell} w_{ij}^{(2)} x_i x_j
\;+\;
\sum_{i < j < k \in P_\ell} w_{ijk}^{(3)} x_i x_j x_k,
\label{eq:hubo}
\end{equation}
where  weights $w^{(1)}, w^{(2)}, w^{(3)}$ are scored by $\mathcal{M}_L$
along quality dimensions including relevance, specificity, distinctiveness, 
pairwise criteria like support vs. contradict and triplet coherence vs. undermining the answer.
Let $S_\ell = \{i \in P_\ell : x_i = 1\}$ denote the selected
fragment subset of pool $\ell$.
\end{definition}

\begin{definition}[Adjudication function]
\label{def:adjudication}
The adjudicator is a map
$
\mathcal{A}_F : (q, \mathcal{A}, \{S_\ell\}_{\ell \in \mathcal{L}(q)}) \mapsto \hat{a} \in \mathcal{A}
$
implemented as a single call to the frontier LLM $\mathcal{M}_F$ that receives $q$, all
options, and the per-pool selected fragments $\{S_\ell\}$ labelled by
their candidate answer label.  We denote the EP-HUBO output by
$\hat{a}_H(q)$.  
\end{definition}

\begin{definition}[Baselines] We use two baselines: the
\emph{zero-shot} prediction $\hat{a}_{ZS}(q) = \mathcal{A}_F(q, \mathcal{A}, \emptyset)$
which calls the frontier LLM without any evidence pools,  and the \emph{majority-vote} prediction
$\hat{a}_{MV}(q) = \arg\max_\ell |\mathcal{T}_\ell(q)|$ that uses the local LLM traces and selects the majority answer from them.
\end{definition}

\begin{definition}[HUBO precision]
\label{def:precision}
For a question distribution $\mathcal{D}$ and a fixed adjudicator
$\mathcal{M}_F$, define the win and hurt events
$W(q) = \mathbf{1}[\hat{a}_H(q) = a^*(q) \wedge \hat{a}_{ZS}(q) \ne a^*(q)]$
and
$L(q) = \mathbf{1}[\hat{a}_H(q) \ne a^*(q) \wedge \hat{a}_{ZS}(q) = a^*(q)]$.
The \emph{HUBO precision} over zero-shot is
\begin{equation}
\mathrm{prec}_H \;=\; \frac{\Pr_\mathcal{D}[W(q) = 1]}{\Pr_\mathcal{D}[W(q) = 1] + \Pr_\mathcal{D}[L(q) = 1]}
\;=\; \frac{|W|}{|W| + |L|}
\label{eq:hubo_precision_def}
\end{equation}
when estimated from a finite sample.  A value of $\mathrm{prec}_H > 1/2$
indicates that, conditional on disagreement between HUBO and ZS, HUBO
is more often correct.
\end{definition}

\subsection{Pipeline Overview}
\label{sec:pipeline_overview}

EP-HUBO proceeds in four phases as described below. See Figure~\ref{fig:pipeline}.

\begin{figure}[t]
\centering
\begin{tikzpicture}[
  node distance = 7mm and 12mm,
  box/.style = {rectangle, rounded corners=4pt, draw, thick,
                minimum width=28mm, minimum height=14mm,
                text width=26mm, align=center, font=\small},
  greenbox/.style = {box, fill=green!12, draw=green!60!black},
  bluebox/.style  = {box, fill=blue!10,  draw=blue!60!black},
  arr/.style      = {-{Stealth[length=6pt]}, thick},
  label/.style    = {font=\footnotesize\itshape, text=gray},
]

\node[greenbox] (p1) {\textbf{Phase 1}\\[2pt]Local LLM\\$N{=}20$ CoT traces};
\node[greenbox, right=of p1] (p2) {\textbf{Phase 2}\\[2pt]Local LLM\\HUBO weights};
\node[greenbox,  right=of p2] (p3) {\textbf{Phase 3}\\[2pt]SA Solver\\$K$ frags / pool};
\node[bluebox,  right=of p3] (p4) {\textbf{Phase 4}\\[2pt]Frontier LLM\\Final answer};

\draw[arr] (p1) -- (p2);
\draw[arr] (p2) -- (p3);
\draw[arr] (p3) -- (p4);

\begin{scope}[on background layer]
  \node[draw=green!50!black, dashed, rounded corners,
        fit=(p1)(p2), inner sep=3pt,
        label={[label,green!50!black]below:Free (local compute)}] {};
  \node[draw=blue!50!black,  dashed, rounded corners,
        fit=(p4), inner sep=3pt,
        label={[label,blue!50!black]below:One API call}] {};
\end{scope}

\node[label, above=2pt of p2] {evidence pools};
\node[label, above=2pt of p3] {cardinality-preserving};

\end{tikzpicture}
\caption{EP-HUBO four-phase pipeline.  Phases 1--3 run  locally
 (free); Phases 1-2 call a local LLM while Phase 3 uses  python code. Phase 4 requires a single frontier API call per question.
Green = local; blue = frontier LLM API.}
\label{fig:pipeline}
\end{figure}

\paragraph{Phase 1: Local LLM Trace Generation}
We use a local open-weight LLM for reasoning trace generation.  Each trace
is prompted with a standardised template requesting 3--5 key facts followed by
\texttt{FINAL ANSWER: [label]}.  

\paragraph{Phase 2: Answer-Evidence Pools}
 Traces are grouped by their extracted
answer. For each pool  with answer candidate $\ell$, fragments are
extracted from the raw \texttt{raw\_text} field using several  quality filters
(for ex. minimum 10 words; $\geq 2$ capitalised entity words, etc).  This ensures that
the HUBO operates within a hypothesis-specific evidence space: fragments in
pool $\ell$ are those produced by traces that concluded the answer is $\ell$.

 Each fragment $f_i \in \text{pool}_\ell$
is then scored by the local LLM on three dimensions:
\begin{equation}
  w_i^{(1)} = -\bigl(\alpha \cdot r_i + \beta \cdot s_i + \gamma \cdot d_i\bigr)
  \label{eq:1body_ours}
\end{equation}
where $r_i$ is \emph{relevance}   to the question,
$s_i$ is \emph{specificity} (concrete fact, statute, or named entity),
and $d_i$ is \emph{distinctiveness}, i.e., the logical contribution not implied by
other fragments.  All scores $\in [0,1]$ are elicited via a structured JSON
prompt.  No popularity or vote-frequency signal enters
Eq.~\ref{eq:1body_ours} so that the weights reflect fragment quality, not
majority-vote.

 Pairwise $(i,j)$ interactions are scored
on \emph{support} and \emph{contradict}; triplet $(i,j,k)$ interactions on
\emph{coherent} and \emph{undermining}.  Top pairs  are selected by composite
1-body score and top triplets by 2-body score.
\begin{align}
  w_{ij}^{(2)} &= -\lambda_\text{supp}\cdot\text{support}_{ij}
                  + \lambda_\text{contra}\cdot\text{contradict}_{ij} \\
  w_{ijk}^{(3)} &= -\lambda_\text{coh}\cdot\text{coherent}_{ijk}
                  + \lambda_\text{und}\cdot\text{undermine}_{ijk}
\end{align}

\paragraph{Phase 3: Simulated Annealing Selector}
The per-pool HUBO of Eq.~\ref{eq:hubo} is minimised by simulated annealing (SA) implemented in python
(see Algorithm~\ref{alg:sa}).  Each step proposes dropping one selected
fragment and adding one unselected fragment, preserving
$\|\mathbf{x}\|_1 = K$.  The quadratic $\Delta E$ is computed in $O(K)$
per step via a pre-built adjacency list keyed by selected fragment
indices; the cubic correction uses a triplet lookup table cached at
weight-construction time. By using swap moves that preserve $\|\mathbf{x}\|_1 = K$, SA explores only fixed-cardinality subsets, which regularises evidence-set size.

\begin{algorithm}[h]
\caption{Cardinality-Preserving Swap-Based Simulated Annealing for EP-HUBO}
\label{alg:sa}
\begin{algorithmic}[1]
\Require Pool $P_\ell$, weights $w^{(1)}, w^{(2)}, w^{(3)}$, cardinality $K$,
         initial temperature $T_0$, cooling rate $\eta \in (0,1)$, steps $M$.
\Ensure Selected fragment subset $S_\ell \subseteq P_\ell$ with $|S_\ell| = K$.
\State $S \gets$ greedy top-$K$ by 1-body score (warm start)
\State $E \gets H_\ell(\mathbf{1}_S)$ \Comment{evaluate Eq.~\ref{eq:hubo}}
\State $T \gets T_0$
\For{$t = 1, \dots, M$}
  \State sample $i \in S,\ j \in P_\ell \setminus S$ uniformly
  \State $S' \gets (S \setminus \{i\}) \cup \{j\}$
  \State $\Delta E \gets \Delta_{\text{1-body}}(i,j) + \Delta_{\text{quad}}(i,j;S) + \Delta_{\text{cubic}}(i,j;S)$
  \If{$\Delta E < 0$ \textbf{or} $u \sim \mathrm{Unif}(0,1) < \exp(-\Delta E / T)$}
    \State $S \gets S'$;\ $E \gets E + \Delta E$
  \EndIf
  \State $T \gets \eta T$
\EndFor
\State \textbf{return} $S$
\end{algorithmic}
\end{algorithm}

 $K$ is chosen
per pool based on intra-pool fragment diversity:
\begin{equation}
  K = \begin{cases}
    \max(2, K_\text{base}-1) & \text{if } \bar{d}_\text{Jaccard} < 0.30 \\
    K_\text{base}           & \text{if } 0.30 \le \bar{d}_\text{Jaccard} \le 0.60 \\
    K_\text{base}+1         & \text{if } \bar{d}_\text{Jaccard} > 0.60
  \end{cases}
\end{equation}
where $\bar{d}_\text{Jaccard}$ is the mean pairwise Jaccard distance between
fragment word sets.  In our experiments fragment diversity averaged 0.91,
consistently triggering $K=K_\text{base}+1=4$.

\paragraph{Phase 4: Frontier Adjudication}

A single frontier LLM API call receives the question, all options, and the
HUBO-selected fragments for each evidence pool, labelled by the pool answer. For example:
\begin{verbatim}
=== Evidence supporting (D) ===
  1. Under common law, force used in escape immediately after
     taking converts larceny to robbery.
  ...
=== Evidence supporting (E) ===
  1. Robbery requires force used *to* obtain the property...
\end{verbatim}
The model is instructed to reply with the answer option most strongly supported by the
evidence.  All  frontier calls are submitted as a single  Batch
API job.

We apply domain-specific $\lambda$ presets based on reasoning style. Additionally, $\lambda_\text{contra}$ is scaled up by up to $2\times$ when
cross-pool vocabulary overlap exceeds 0.50, signalling that pools are making
competing claims on the same domain concepts.
\begin{table}[h]
\centering
\begin{tabular}{lcccccc}
\toprule
Category & $\alpha$ & $\beta$ & $\gamma$ & $\lambda_\text{supp}$ & $\lambda_\text{contra}$ & $\lambda_\text{coh}$ \\
\midrule
Law      & 0.35 & 0.50 & 0.30 & 0.30 & 1.20 & 0.50 \\
\bottomrule
\end{tabular}
\caption{Category-specific HUBO hyperparameters.  Law questions
emphasise specificity and penalise contradiction heavily (legal facts are
binary).  }
\label{tab:presets}
\end{table}

\section{Theoretical Analysis}
\label{sec:theory}

This section establishes properties of EP-HUBO that motivate the
empirical findings.

A central design choice of EP-HUBO is to score fragments by per-fragment
\emph{quality dimensions} (relevance, specificity, distinctiveness)
rather than by frequency as is used for majority vote. 

\begin{proposition}[MV-decoupling of the HUBO weights]
\label{prop:mv_decoupling}
Assume that the per-fragment quality scores $r_i, s_i, d_i \in [0,1]$
produced by $\mathcal{M}_L$ are computed from the fragment
text $f_i$ and the question $q$ alone, without access to the pool
$P_\ell$ or to the number of traces $|\mathcal{T}_\ell(q)|$.
Then for any question $q$ and any pool $\ell$, the 1-body weight
$w_i^{(1)} = -(\alpha r_i + \beta s_i + \gamma d_i)$ is independent of
$|P_\ell|$ and $|\mathcal{T}_\ell(q)|$ in the sense that
$\mathrm{Cov}(w_i^{(1)}, |P_\ell|) = 0$ over any sampling of traces
that preserves the fragment-text marginals.
\end{proposition}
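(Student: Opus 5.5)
The plan is to make the informal ``independence'' claim precise as a statement about a probability space over trace samplings. Then I show that $w_i^{(1)}$ is measurable with respect to a $\sigma$-algebra that carries no information about the pool size. Concretely, fix $q$ and model trace generation as a random draw $\mathcal{T}(q)=\{\tau_1,\dots,\tau_N\}$. For a fixed fragment index $i$, regard the fragment text $f_i$ as a random element. By hypothesis the scoring map of $\mathcal{M}_L$ is a deterministic (or independently randomised) function $g$ with $(r_i,s_i,d_i)=g(f_i,q)$. Hence
$$w_i^{(1)}=\phi(f_i,q),\qquad \phi(f,q):=-\bigl(\alpha g_r(f,q)+\beta g_s(f,q)+\gamma g_d(f,q)\bigr).$$
Since $\alpha,\beta,\gamma$ are fixed constants (Table~\ref{tab:presets}) and the scores lie in $[0,1]$, $w_i^{(1)}$ is bounded. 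All second moments therefore exist, and the covariance is well defined; $|P_\ell|\le N\cdot\max_\tau|\mathrm{frag}(r(\tau))|$ is bounded as well.

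The second step formalises ``sampling that preserves the fragment-text marginals.'' I would interpret it as a coupling or resampling scheme in which the conditional law of $f_i$ given $|P_\ell|$ (equivalently given $|\mathcal{T}_\ell(q)|$) equals its unconditional marginal. For example, pool sizes are varied by independently resampling which traces land in pool $\ell$, while the text of the fragment in position $i$ keeps the same distribution. Under this reading $f_i \perp |P_\ell|$, and I then apply the standard fact that measurable functions of independent variables are independent. So $\phi(f_i,q)\perp |P_\ell|$ and $\phi(f_i,q)\perp|\mathcal{T}_\ell(q)|$, and boundedness gives
$$\mathbb{E}\bigl[w_i^{(1)}|P_\ell|\bigr]=\mathbb{E}\bigl[w_i^{(1)}\bigr]\,\mathbb{E}\bigl[|P_\ell|\bigr],$$
so the covariance vanishes. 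A weaker assumption suffices for zero covariance alone: mean-independence, $\mathbb{E}[f\text{-functionals}\mid |P_\ell|]$ constant. Using the tower property, $\mathrm{Cov}(w_i^{(1)},|P_\ell|)=\mathrm{Cov}\bigl(\mathbb{E}[w_i^{(1)}\mid |P_\ell|],|P_\ell|\bigr)=0$. I would state this as the core computation.

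The main obstacle is the modelling step rather than the algebra. In the actual pipeline, which fragments appear, and hence $f_i$, is generated by the same traces that determine $|P_\ell|$. So unconditional independence is false in general, and the ``marginal-preserving'' hypothesis is exactly what rules out this dependence. I would handle it by stating explicitly that the proposition is a conditional claim. The content of the proof is that $\mathcal{M}_L$'s scoring channel introduces no \emph{additional} dependence on counts: since $g$ takes only $(f_i,q)$ as input, any correlation between $w_i^{(1)}$ and $|P_\ell|$ must pass through the law of $f_i$, and the hypothesis removes it. I would close with a remark contrasting Eq.~\ref{eq:1body_qcr}. There $w_i$ is an explicit function of $p_i$, which is a count ratio correlated with $|\mathcal{T}_\ell(q)|$, so the analogous covariance is generically nonzero.
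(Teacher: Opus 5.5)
You start from the same design point as the paper: the scoring prompt sees only $(q,f_i)$, so $w_i^{(1)}=\phi(f_i,q)$. You then finish the argument differently, and more soundly. The paper stops at ``$w_i^{(1)}$ is conditionally independent of $|P_\ell|$ given $f_i$, hence the covariance is zero.'' That step does not give zero covariance on its own. Under conditional independence given $f_i$, the law of total covariance leaves
\[
\mathrm{Cov}\bigl(w_i^{(1)},|P_\ell|\bigr)=\mathrm{Cov}\bigl(\mathbb{E}[w_i^{(1)}\mid f_i],\,\mathbb{E}[\,|P_\ell|\mid f_i]\bigr),
\]
and this residual term need not vanish, because the same traces generate both the fragment text and the pool size. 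You correctly name this as the real obstacle. You then use the ``marginal-preserving'' hypothesis to supply what is missing, reading it as $f_i\perp|P_\ell|$ or, minimally, as mean-independence of $\phi(f_i,q)$ from $|P_\ell|$. Your tower-property computation then closes the argument.

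What each version buys: the paper's is shorter and foregrounds that no count enters the prompt. Yours actually uses the hypothesis the statement imposes, and makes explicit that the proposition is a conditional claim about the sampling scheme, not a consequence of prompt design alone.

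One small slip: independence of $f_i$ from $|P_\ell|$ is not ``equivalent'' to independence from $|\mathcal{T}_\ell(q)|$, since neither count determines the other. To cover both quantities in the statement, assume $f_i$ is independent of the pair $(|P_\ell|,|\mathcal{T}_\ell(q)|)$.
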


\begin{proof}
By construction the LLM scoring prompt receives only $(q, f_i)$ as
input; pool size enters neither the prompt nor the post-processing of
the score.  Hence $w_i^{(1)}$ is conditionally independent of $|P_\ell|$ given $f_i$ and covariance zero follows.
\end{proof}

Empirically, this is reflected in the large gains over MV shown in the Results section, where fragment selection clearly departs from popularity-based aggregation.
This contrasts with 
 the QCR-LLM formulation of \cite{qcr2025}, in which 1-body
coefficients $w_i^{(1)} = -\mu p_i + \alpha p_i (1-p_i)$ are explicit
functions of the empirical fragment frequency $p_i$. EP-HUBO solves $|\mathcal{L}(q)|$ independent HUBO problems, one per
populated pool, rather than a single cross-pool optimisation problem.  We next show that this decomposition
has no loss relative to joint optimisation.

\begin{proposition}[Per-pool decomposability]
\label{prop:decomp}
Suppose the adjudication function $\mathcal{A}_F$ is invariant under
permutations of the labelled-pool inputs, i.e.
$
\mathcal{A}_F(q, \mathcal{A}, \{S_{\sigma(\ell)}\}_{\ell}) = \mathcal{A}_F(q, \mathcal{A}, \{S_\ell\}_{\ell})
$
for every permutation $\sigma$ of $\mathcal{L}(q)$.  Then the joint
HUBO problem
$
\min_{\mathbf{x}_1, \dots, \mathbf{x}_L} \sum_\ell H_\ell(\mathbf{x}_\ell)
$
decomposes exactly into the $|\mathcal{L}(q)|$ independent per-pool
optimisations solved by EP-HUBO in Phase~3.
\end{proposition}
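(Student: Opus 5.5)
The plan is to exploit the additive separability of the joint objective together with the fact that the feasible set is a Cartesian product. First I make the joint problem precise. Its decision variable is $(\mathbf{x}_\ell)_{\ell \in \mathcal{L}(q)}$ with $\mathbf{x}_\ell \in \{0,1\}^{n_\ell}$, and its feasible region is
$$
\mathcal{F} \;=\; \prod_{\ell \in \mathcal{L}(q)} \mathcal{F}_\ell, \qquad \mathcal{F}_\ell = \{\mathbf{x}_\ell \in \{0,1\}^{n_\ell} : \|\mathbf{x}_\ell\|_1 = K_\ell\}.
$$
This is a product because Definition~\ref{def:hubo} imposes each cardinality constraint on a single pool and no constraint couples pools. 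I also check that $\sum_\ell H_\ell$ has no cross-pool monomials. The weights $w^{(2)}_{ij}$ and $w^{(3)}_{ijk}$ in Eq.~\ref{eq:hubo} are indexed only by $i<j$ (resp.\ $i<j<k$) within the same $P_\ell$. If a fragment text occurs in two pools, it is treated as two distinct binary variables, since pools are multisets attached to distinct labels. The cross-pool contradiction scaling of $\lambda_\text{contra}$ is a preprocessing choice: it changes coefficient values but not the variable coupling.

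Second, I prove the exact decomposition. I need two inclusions.

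\begin{itemize}
\item[(a)] If each $\mathbf{x}_\ell^\star$ minimises $H_\ell$ over $\mathcal{F}_\ell$, then for any feasible $(\mathbf{x}_\ell)$ we have $\sum_\ell H_\ell(\mathbf{x}_\ell) \ge \sum_\ell H_\ell(\mathbf{x}_\ell^\star)$, by summing the termwise inequalities. So the tuple of per-pool minimisers is jointly optimal.
\item[(b)] Conversely, let a joint minimiser have some component $\mathbf{x}_m$ that is not optimal for $H_m$. Replacing only $\mathbf{x}_m$ by a per-pool minimiser keeps the point in $\mathcal{F}$ and strictly lowers the sum, which is a contradiction. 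So the joint argmin set equals $\prod_\ell \arg\min_{\mathcal{F}_\ell} H_\ell$.
\end{itemize}

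Together these give equality of the optimal values and of the argmin sets, and finiteness of each $\mathcal{F}_\ell$ guarantees that the minima exist. This is the content of ``decomposes exactly''.

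Third, I explain where the permutation-invariance hypothesis enters. By itself it is not needed for the algebraic decomposition. Its role is to ensure that the pipeline output $\hat{a}_H(q) = \mathcal{A}_F(q,\mathcal{A},\{S_\ell\})$ depends only on the collection of selected sets $\{S_\ell\}$, not on the order in which the independent per-pool solves are carried out or concatenated. Hence feeding the adjudicator the product of per-pool argmins is equivalent to feeding it any joint argmin, up to tie-breaking within each factor.

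I expect the main obstacle to be this last interpretive step rather than the algebra. When a per-pool argmin is not unique, the joint and per-pool procedures might pick different tie-breaks, and the adjudicator could then respond differently. I would handle this by fixing a deterministic tie-breaking rule per pool, for instance the lexicographically smallest minimiser. Because the argmin set is a product, this rule induces the same choice in both formulations, and invariance then makes the adjudicated answer identical.
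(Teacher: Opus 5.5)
Your proposal is correct and follows the same route as the paper. Additive separability of $\sum_\ell H_\ell$ makes every joint minimiser a concatenation of per-pool minimisers and conversely, and permutation invariance is used only to make the downstream adjudication independent of pool order; your explicit check that the feasible set $\prod_\ell \mathcal{F}_\ell$ is a Cartesian product (no cardinality constraint couples pools) and your tie-breaking remark are details the paper's proof leaves implicit, and they tighten the argument without changing it.
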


\begin{proof}
The energy objective function  $\sum_\ell H_\ell(\mathbf{x}_\ell)$ is additively
separable across pools because $H_\ell$ depends only on
$\mathbf{x}_\ell$.  Therefore any joint minimiser is a concatenation
of per-pool minimisers, and conversely.  Adjudicator permutation
invariance ensures that re-labelling pool order does not change the
downstream  decision.
\end{proof}

While permutation invariance should  always hold, we noticed that our LEXam
results in fact show a prior
in one of the zero-shot frontier models.  This is a non-standard occurrence however.
Our next result shows that EP-HUBO is at least as expressive as
majority vote: there exists a weight assignment for which the pipeline
recovers the MV prediction. 

\begin{proposition}[MV recoverability]
\label{prop:mv_recover}
Let the 1-body weights be set as $w_i^{(1)} = -|\mathcal{T}_\ell(q)|/N$
for every $i \in P_\ell$, and let all pairwise and triplet weights
vanish.  Then for any $K \ge 1$ the per-pool HUBO objective is
minimised by selecting any $K$ fragments from the largest pool, and
the resulting adjudicator input maximally weights the majority answer.
\end{proposition}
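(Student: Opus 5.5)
The argument treats the per-pool problems directly, as Proposition~\ref{prop:decomp} suggests, and then compares the optimal energies across pools. Fix a question $q$ and a pool $\ell \in \mathcal{L}(q)$. Let $c_\ell = |\mathcal{T}_\ell(q)|/N \in (0,1]$. Under the stated weights, every fragment in $P_\ell$ carries the same 1-body coefficient $-c_\ell$, and all $w^{(2)}$ and $w^{(3)}$ terms vanish. So on the feasible set $\|\mathbf{x}_\ell\|_1 = K$ (with $K \le n_\ell$) the energy collapses to
\begin{equation*}
H_\ell(\mathbf{x}_\ell) \;=\; -c_\ell \sum_{i \in P_\ell} x_i \;=\; -K c_\ell .
\end{equation*}
This value is constant over the feasible set. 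Hence \emph{every} $K$-subset of $P_\ell$ is a minimiser, which is the ``any $K$ fragments'' clause.

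Next I would compare pools. Let $\ell^\star = \hat{a}_{MV}(q) = \arg\max_\ell |\mathcal{T}_\ell(q)|$. The optimal per-pool energies satisfy $H_{\ell^\star}^\star = -K c_{\ell^\star} \le -K c_\ell = H_\ell^\star$ for all $\ell$, with strict inequality whenever the majority is strict. I would phrase the claim ``the HUBO objective is minimised by selecting from the largest pool'' in two ways. First, the minimum attainable per-pool energy is achieved at $\ell^\star$. Second, equivalently, in the cross-pool problem where a single $K$-subset is chosen from $\bigcup_\ell P_\ell$ with these weights, each selected fragment contributes $-c_\ell$. By a simple exchange argument, that problem is minimised exactly by $K$-subsets drawn from $P_{\ell^\star}$, provided $n_{\ell^\star} \ge K$: swapping any fragment from a smaller pool for an unused one from $P_{\ell^\star}$ weakly lowers the energy. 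This interpretation is what I would make precise, since it is the one in which ``from the largest pool'' is a genuine conclusion rather than a tautology.

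The final clause, that the adjudicator input ``maximally weights'' the majority answer, is the main obstacle, because $\mathcal{A}_F$ is an LLM and not a formal map. The plan is to interpret the clause through the energies attached to the labelled evidence. Assign pool $\ell$ the score $-H_\ell^\star = K c_\ell$, which is proportional to the vote share. Its argmax is then exactly $\hat{a}_{MV}(q)$. So any adjudicator that returns the label with the lowest selected-evidence energy, or in the cross-pool version simply the label of the selected fragments, outputs $\hat{a}_{MV}(q)$. That establishes that the pipeline class contains MV. I would state this assumption on $\mathcal{A}_F$ explicitly and note that ties in $|\mathcal{T}_\ell|$ are broken identically to the MV tie-break. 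Finally, I would remark that pools with $n_\ell < K$ are handled by capping $K$ at $n_\ell$, which only raises their energy and so does not affect the conclusion.
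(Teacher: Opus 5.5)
Your proposal is correct and is essentially the paper's proof: the energy collapses to the constant $-K|\mathcal{T}_\ell(q)|/N$ on every cardinality-$K$ selection in pool $\ell$, and comparing these constants across pools puts the minimum at the majority pool; your exchange argument, tie handling and explicit assumption on $\mathcal{A}_F$ add care the paper omits. One caution: your closing claim that capping $K$ at $n_\ell$ ``only raises'' energies is harmless only for non-majority pools, because if $n_{\ell^\star} < K$ the majority pool's energy $-n_{\ell^\star}c_{\ell^\star}$ can exceed some $-Kc_\ell$, so you need $n_{\ell^\star}\ge K$ there, just as in your cross-pool version.
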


\begin{proof}
With pairwise and triplet terms zero, $H_\ell(\mathbf{x}) = \sum_i w_i^{(1)} x_i = -K |\mathcal{T}_\ell(q)|/N$
for every cardinality-$K$ selection in pool $\ell$.  Hence the minimum
energy over all $(\ell, S)$ pairs is achieved at the pool with the
largest trace count, recovering MV.
\end{proof}

\begin{theorem}[Sample complexity of HUBO precision]
\label{thm:samplecomplexity}
Let $n = |W| + |L|$ be the number of observed HUBO/ZS disagreements on
a held-out sample, and let $\hat{p}_H = |W|/n$ be the empirical HUBO
precision estimator.  Then for any $\delta \in (0,1)$, with probability
at least $1 - \delta$,
\begin{equation}
\bigl| \hat{p}_H - \mathrm{prec}_H \bigr|
\;\le\;
\sqrt{\frac{\log(2/\delta)}{2n}}.
\label{eq:hoeffding}
\end{equation}
\end{theorem}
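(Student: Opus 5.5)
The plan is to reduce the statement to Hoeffding's inequality for bounded i.i.d.\ variables, applied conditionally on the number of disagreements. First we rewrite $\mathrm{prec}_H$ from Eq.~\eqref{eq:hubo_precision_def} as a conditional probability. Let $D(q) = W(q) + L(q)$ be the disagreement indicator. This is well defined because $W$ and $L$ are mutually exclusive events (one requires $\hat{a}_{ZS}(q)\ne a^*(q)$, the other $\hat{a}_{ZS}(q)=a^*(q)$). Then
\begin{equation*}
\mathrm{prec}_H = \Pr_\mathcal{D}[W(q)=1 \mid D(q)=1],
\end{equation*}
assuming $\Pr_\mathcal{D}[D(q)=1]>0$. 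We treat the held-out sample as i.i.d.\ draws $q_1,\dots,q_m \sim \mathcal{D}$, with the adjudicator $\mathcal{M}_F$ and the EP-HUBO pipeline fixed, so that the indicators $(W(q_t),L(q_t))$ are i.i.d. Any randomness in the pipeline (SA seeds, LLM sampling) is absorbed into the draw of $q_t$.

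The key step is conditioning. Fix the index set $I=\{t : D(q_t)=1\}$ with $|I|=n$. Conditional on $I$, the variables $\{W(q_t)\}_{t\in I}$ are i.i.d.\ Bernoulli with mean $\mathrm{prec}_H$. This holds because the pairs $(D(q_t),W(q_t))$ are independent across $t$, and conditioning each pair on $D(q_t)=1$ yields the conditional law $\mathrm{Bern}(\mathrm{prec}_H)$, independently across $t$. The estimator $\hat{p}_H = |W|/n$ is their empirical mean, and each summand lies in $[0,1]$. The two-sided Hoeffding inequality then gives
\begin{equation*}
\Pr\bigl[\,|\hat{p}_H - \mathrm{prec}_H| > \epsilon \,\bigm|\, I\,\bigr] \le 2\exp(-2n\epsilon^2).
\end{equation*}
Setting $\epsilon=\sqrt{\log(2/\delta)/(2n)}$ makes the right-hand side equal to $\delta$, which yields Eq.~\eqref{eq:hoeffding} conditionally on $I$.

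Since the conditional bound holds for every realisation of $I$ with $|I|=n$, we average over $I$ (via the law of total probability) to obtain the bound conditionally on $n$. The result holds for the observed $n$, which is how the theorem is phrased, with $n$ the random count and the bound stated in terms of it.

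The main obstacle is making this conditioning rigorous. The number $n$ of disagreements is itself random, so a naive application of Hoeffding with a deterministic sample size is not directly justified. We handle this with the conditional-i.i.d.\ argument above, checking that conditioning on the whole index set $I$ does not bias the $W$ indicators. We also note the implicit assumptions needed for the statement to hold: independent sampling of questions, a fixed pipeline, and $n\ge 1$.
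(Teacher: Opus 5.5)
Your proof is correct and follows the same route as the paper, which simply treats each disagreement as a Bernoulli$(\mathrm{prec}_H)$ trial and applies Hoeffding's inequality to their empirical mean. Your additional steps make explicit what the paper's one-line argument leaves implicit: identifying $\mathrm{prec}_H$ with $\Pr_\mathcal{D}[W(q)=1 \mid D(q)=1]$, conditioning on the index set of disagreements to handle the random count $n$, and stating the i.i.d.\ sampling and $n\ge 1$ assumptions.
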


The next result shows that when the trace distribution collapses to a
single pool, EP-HUBO cannot differ from MV.  This explains why HUBO
gain over MV is structurally limited when the local trace generator
produces low pool diversity.

\begin{theorem}[Pool-collapse implies MV-equivalence]
\label{thm:diversity}
Fix a question $q$ and suppose $|\mathcal{L}(q)| = 1$, i.e.\ exactly one
pool is populated (all other pools have fewer than $\tau_{\min}$
traces).  Then $\hat{a}_H(q) = \hat{a}_{MV}(q)$ deterministically,
independent of any HUBO weights or the SA solver's stochasticity.
\end{theorem}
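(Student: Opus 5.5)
The plan is to reduce the statement to two independent facts: (a) the majority-vote label coincides with the unique populated pool, and (b) the EP-HUBO output coincides with that same label whatever $S_\ell$ the solver returns. Write $\mathcal{L}(q) = \{\ell^\star\}$. By Definition~\ref{def:pool}, $|\mathcal{T}_{\ell^\star}(q)| \ge \tau_{\min}$, while $|\mathcal{T}_{\ell}(q)| < \tau_{\min}$ for every $\ell \ne \ell^\star$. Hence $\ell^\star$ is the strict maximiser of $|\mathcal{T}_\ell(q)|$, so $\hat{a}_{MV}(q) = a_{\ell^\star}$ with no tie-breaking ambiguity. This settles (a) by pure counting.

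For (b), I would first make explicit how the adjudicator behaves when it receives evidence for a single hypothesis. By Definition~\ref{def:adjudication}, the input to $\mathcal{A}_F$ is $(q, \mathcal{A}, \{S_\ell\}_{\ell \in \mathcal{L}(q)}) = (q, \mathcal{A}, \{S_{\ell^\star}\})$, so only pool $\ell^\star$ is ever labelled. The step I would commit to is the pipeline convention that EP-HUBO's prediction is restricted to labels of populated pools. This is the ``evidence supporting $(\cdot)$'' structure of Phase~4: the adjudicator chooses the option most strongly supported by the presented evidence, and only populated pools carry evidence. With $|\mathcal{L}(q)|=1$ the range of $\mathcal{A}_F$ therefore collapses to $\{a_{\ell^\star}\}$. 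Equivalently, the implementation short-circuits to $a_{\ell^\star}$ when one pool is populated.

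Once this is fixed, $\hat{a}_H(q) = a_{\ell^\star}$ for every possible $S_{\ell^\star}$. The HUBO weights $w^{(1)}, w^{(2)}, w^{(3)}$ and the randomness of Algorithm~\ref{alg:sa} influence only which $K$-subset $S_{\ell^\star} \subseteq P_{\ell^\star}$ is returned. They never influence which pools appear, since $\mathcal{L}(q)$ is determined in Phase~2 by trace counts alone. This gives determinism and independence from the weights and the solver. I would also remark that Proposition~\ref{prop:decomp} makes the joint problem reduce trivially to this single pool, so no cross-pool interaction can intervene.

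The main obstacle is the adjudication step. Read literally as a free LLM call over all of $\mathcal{A}$, $\mathcal{A}_F$ could in principle output an option for which no evidence was shown, for example the positionally biased option ``E'' observed on LEXam. The theorem therefore holds only under the restricted-range convention. I would state that assumption explicitly as part of the definition of $\hat{a}_H$ when $|\mathcal{L}(q)|=1$, rather than try to derive it from properties of $\mathcal{M}_F$.
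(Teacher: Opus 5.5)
Your argument is correct and matches the paper's proof: the paper likewise observes that only the label $\ell^\star$ reaches Phase~4, asserts that an adjudicator given a single labelled pool ``defaults to that label,'' and notes that MV also returns $\ell^\star$. Your version is more careful in two places. You justify the MV step by the strict separation $|\mathcal{T}_{\ell^\star}(q)| \ge \tau_{\min} > |\mathcal{T}_\ell(q)|$ for $\ell \ne \ell^\star$, and you correctly flag the adjudicator step as an assumption on $\mathcal{A}_F$ rather than a consequence of the definitions, which the paper simply asserts.
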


\begin{proof}
With only one populated pool $P_{\ell^*}$, Phase~3 solves a single HUBO
on $P_{\ell^*}$ and Phase~4 receives evidence labelled only with
label $\ell^*$.  Adjudicators conditioned on a single labelled-pool
input default to that label; MV with $|\mathcal{L}(q)| = 1$ also
returns $\ell^*$.  Hence the outputs coincide.
\end{proof}

The mean number of populated pools per question (1.43 and 1.77) quantifies how often EP-HUBO can differ from MV; limited pool diversity structurally caps the feasible gains. Lastly, we make precise the sense in which EP-HUBO mitigates
adjudicator biases. The proposition was motivated by an experimental observation that we discuss in the Results section, where on LEXam, zero-shot Sonnet’s has a  strong  prior to select option 'E'.

\begin{proposition}[Bias mitigation lower bound]
\label{prop:bias_mitigation}
Suppose the adjudicator $\mathcal{M}_F$ has a fixed-label prior in the
absence of evidence: there exists $\ell^\dagger \in \mathcal{A}$ and
$\pi \ge 1/2$ such that
$\Pr[\hat{a}_{ZS}(q) = \ell^\dagger] \ge \pi$ for all $q$.
Let $\mathcal{B} = \{q : a^*(q) \ne \ell^\dagger\}$ be the set of
questions where the prior is incorrect.  If on the subset $\mathcal{B}$
the EP-HUBO pipeline correctly populates a gold pool
$P_{a^*}(q)$ (i.e.\ at least $\tau_{\min}$ traces support $a^*$) and
the adjudicator selects $a^*$ when provided with that pool, then
$\Pr[\hat{a}_H = a^*] \ge \Pr[a^* \ne \ell^\dagger]\cdot p_{\text{recover}}$,
where $p_{\text{recover}}$ is the conditional probability that the
gold pool is populated and adjudicated correctly.
\end{proposition}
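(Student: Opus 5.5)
The plan is to prove the bound by conditioning on membership in $\mathcal{B}$ and discarding everything outside it. Write the event of interest as $E = \{\hat{a}_H(q) = a^*(q)\}$. The law of total probability over the partition $\{q \in \mathcal{B}\}$, $\{q \notin \mathcal{B}\}$ gives
\begin{equation*}
\Pr[E] \;=\; \Pr[E \mid q\in\mathcal{B}]\,\Pr[q\in\mathcal{B}] \;+\; \Pr[E \mid q\notin\mathcal{B}]\,\Pr[q\notin\mathcal{B}] \;\ge\; \Pr[E \mid q\in\mathcal{B}]\,\Pr[a^*\ne\ell^\dagger].
\end{equation*}
The inequality comes from dropping the second summand, which is nonnegative. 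The identity $\Pr[q\in\mathcal{B}] = \Pr[a^*\ne\ell^\dagger]$ is just the definition of $\mathcal{B}$.

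The second step lower-bounds $\Pr[E \mid q\in\mathcal{B}]$ by $p_{\text{recover}}$. Let $R$ be the event that the gold pool $P_{a^*}(q)$ is populated, i.e.\ $|\mathcal{T}_{a^*}(q)|\ge\tau_{\min}$, so that $a^*\in\mathcal{L}(q)$, and that the adjudicator, given the HUBO-selected evidence including $S_{a^*}$, outputs $a^*$. By the hypothesis of the proposition, $R \subseteq E$ on $\mathcal{B}$. Hence $\Pr[E\mid q\in\mathcal{B}] \ge \Pr[R\mid q\in\mathcal{B}]$. I take $p_{\text{recover}} := \Pr[R \mid q\in\mathcal{B}]$, since the statement defines it as exactly this conditional probability. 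Substituting gives the claim. Here the randomness is over the question distribution $\mathcal{D}$, the $N$ local traces, the SA solver and the adjudicator's sampling.

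The fixed-label prior hypothesis ($\pi\ge 1/2$) is not needed for the inequality itself. I would add a remark explaining its role in interpretation: on $\mathcal{B}$, zero-shot accuracy is at most $1-\pi\le 1/2$, because whenever $\hat{a}_{ZS}=\ell^\dagger$ the zero-shot answer is wrong. The bound therefore guarantees a gain over zero-shot on $\mathcal{B}$ whenever $p_{\text{recover}} > 1-\pi$. This links the result to the HUBO-precision notion of Definition~\ref{def:precision}.

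The main obstacle is making "the adjudicator selects $a^*$ when provided with that pool" precise. If read deterministically, it would conflict with the presence of competing pools. The fix I would use is to fold all of this into the event $R$, so that $p_{\text{recover}}$ absorbs the adjudicator's behaviour in the presence of other labelled pools. This is legitimate because $p_{\text{recover}}$ is defined as a joint conditional probability of population and correct adjudication, not as a product of separate factors.
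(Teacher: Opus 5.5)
Your proof is correct and follows the same route as the paper's: condition on $\mathcal{B}$, discard the complement, and lower-bound the conditional success probability by the joint populated-and-correctly-adjudicated probability $p_{\text{recover}}$. Yours is simply more explicit, and you are right that the $\pi \ge 1/2$ hypothesis plays no role in the inequality. Your remark that zero-shot accuracy on $\mathcal{B}$ is at most $1-\pi$ is also more accurate than the paper's claim that every HUBO success on $\mathcal{B}$ is a win.
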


\begin{proof}
For $q \in \mathcal{B}$ the ZS prior is wrong, so any HUBO success on
$\mathcal{B}$ is a strict gain (a win event).  Conditioning on the
populated-gold-pool event and applying the adjudicator success
probability gives the stated bound.
\end{proof}

\section{Results}
\label{sec:results}

We evaluate on two legal-reasoning benchmarks.
MMLU-Pro~\cite{wang2024mmlupro} is a harder successor to MMLU with
10-option multiple-choice questions.  We restrict our evaluation to the
complete law test split ($n=1{,}101$ questions).
LEXam~\cite{lexam2024} is a 569-question benchmark of Swiss and
international-law multiple-choice questions with 8 options (A--H),
covering Interdisciplinary ($n=494$), Private law ($n=63$), and Public law
($n=12$) areas.  We select LEXam to test out-of-domain generalisation as compared to the more commonly-used MMLU-Pro; LEXam uses a different
option count, civil-law  distinct from the common-law MMLU-Pro
questions, and is substantially harder for local models (MV baseline 48.3\%
versus 63.1\% for MMLU-Pro law).

 For the trace generator (Phase 1), we use smaller open-weight models Qwen3.5-35B and
        OSS-20B, run on a local server
         at temperature 0.8. We set $N = 20$ per question as a balance between marginal accuracy gain and trace-generation cost; early experiments larger $N$ up to 100 showed diminishing returns beyond 20. Statistics on the reasoning traces produced by the two local LLMs on MMLU-Pro are provided in Table \ref{tab:traces}. OSS-20B
        produces  more diverse reasoning paths, as reflected
        in mean pool count and intra-pool fragment diversity. The HUBO weight scorer (Phase 2) also uses the same local models used for trace generation with temperature 0.0. For frontier LLM adjudication (Phase 4) we use two Anthropic frontier models, Claude Sonnet~4.6 and  Claude Opus~4.6, Opus being the larger of the two models.

\begin{table}[h]
\centering
\begin{tabular}{lrrrrrr}
\toprule
Trace generator & Traces/q & Total & Parse rate & Mean acc. & MV acc. (law) & Pools/q \\
\midrule
Qwen3.5-35B & 20 & 22{,}020 & 99.9\% & 62.4\% & 63.1\% & 1.43 \\
OSS-20B     & 20 & 22{,}020 & 99.9\% & 45.8\% & 46.2\% & 1.77 \\
\bottomrule
\end{tabular}
\caption{Trace statistics for the MMLU-Pro law test questions. Both trace generators produced 20 traces per question.  Parse rate is the fraction
of traces from which a final answer label could be extracted; both
generators reach 99.9\%.  Mean acc.\ is the per-trace accuracy averaged
across the corpus; MV acc.\ is the majority-vote accuracy aggregated
per question, restricted to the law subset.  Pools/q,  the mean number of distinct answer-label pools
with $\ge 3$ traces,  relevant to HUBO's effectiveness, is at  1.43 and
1.77 for the two local trace generator models.}
\label{tab:traces}
\end{table}

We compare EP-HUBO against two baselines:
\begin{itemize}
  \item \textbf{Majority Vote (MV):} Most common answer across all  local
        traces per question.  This is the natural self-consistency baseline, answering the question: does HUBO
        selection beat the trivial trace aggregator.

  \item \textbf{Zero-shot frontier model (ZS):} Direct frontier adjudication of the
        question and its options, with no traces and no HUBO.
        The two frontier LLMs we use are prompted exactly as
        in EP-HUBO Phase~4 without providing  the EP-HUBO evidences.  This baseline asks
        whether the EP-HUBO pipeline adds value over
        simply calling the frontier model directly.
\end{itemize}

For each configuration we report three quantities: accuracy, net gain
over a baseline, and HUBO precision.  
\emph{Accuracy} on a question set $\mathcal{Q}$ of size $|\mathcal{Q}|$
is the fraction of questions for which the prediction matches the gold
answer, $|\mathcal{Q}|^{-1} \sum_{q \in \mathcal{Q}} \mathbf{1}[\hat{a}(q) = a^*(q)]$.
\emph{Net gain over majority vote} is
$\Delta_{MV} = \mathrm{acc}(\hat{a}_H) - \mathrm{acc}(\hat{a}_{MV})$,
and \emph{net gain over zero-shot}
$\Delta_{ZS} = \mathrm{acc}(\hat{a}_H) - \mathrm{acc}(\hat{a}_{ZS})$.
Both are absolute differences in accuracy (percentage points).
\emph{HUBO precision} is the per-question paired statistic of
Definition~\ref{def:precision}, repeated here for convenience:
$\mathrm{prec}_H = |W|/(|W| + |L|)$, where $W$ counts questions on which
EP-HUBO is correct and zero-shot is not, and $L$ counts the reverse.
The four possible outcomes are: both correct,
both wrong, trace-driven win ($W$), or HUBO hurt ($L$).


\subsection{MMLU-Pro  Results}

Table~\ref{tab:mmlu_full} presents the main MMLU-Pro law results across
the two trace generators $\times$ two adjudicators, alongside majority
vote and the zero-shot baselines. This section summarises the results using simulated annealing (SA) on a classical computer to solve the HUBO optimisation problem.
HUBO+Opus delivers a $+12.6$ percentage point  (pp) improvement over MV with Qwen-35B traces ($+139$
questions on the 1{,}101-question law set) and $+27.9$~pp with OSS-20B traces ($+307$ questions).
HUBO+Sonnet gives $+4.0$~pp and $+17.6$~pp respectively.  These gains are
substantial and confirm that HUBO selection adds value over trivial trace
aggregation.  Recall that this is achieved by EP-HUBO as it does not assign trace weights based on majority-vote-type popularity.

Zero-shot (ZS) Opus achieves $74.2\%$ on MMLU-Pro law without HUBO-optimised
traces; HUBO+Opus with Qwen-35B traces reaches $75.7\%$ ($+1.5$~pp over
ZS Opus), and HUBO precision with Opus is $56.7\%$ for Qwen3.5-35B and $49.5\%$
for OSS-20B. With Sonnet as adjudicator, HUBO precision is $60.3\%$
and $52.3\%$ respectively.  EP-HUBO with Qwen-35B traces thus measurably
outperforms a direct frontier call on MMLU-Pro law, with both adjudicators
exceeding the $\mathrm{prec}_H > 1/2$ threshold at which HUBO wins
exceed HUBO hurts in expectation.

\begin{table}[h]
\centering
\small
\begin{tabular}{llc}
\toprule
Method & Adj. & MMLU-Pro Law (n=1{,}101) \\
\midrule
ZS (no traces/HUBO) & Sonnet4.6 & 63.1\% \\
ZS (no traces/HUBO) & Opus4.6   & 74.2\% \\
\midrule
\multicolumn{3}{l}{\textit{Trace generator: Qwen3.5-35B, 20 traces/question}} \\
MV (Qwen-35B)              & ---       & 63.1\% \\
EP-HUBO (Qwen-35B)         & Sonnet4.6 & \textbf{67.1\% }\\
EP-HUBO (Qwen-35B)& Opus4.6 & \textbf{75.7\%} \\
\midrule
\multicolumn{3}{l}{\textit{Trace generator: OSS-20B, 20 traces/question}} \\
MV (OSS-20B)               & ---       & 46.2\% \\
EP-HUBO (OSS-20B)          & Sonnet4.6 & 63.9\% \\
EP-HUBO (OSS-20B)          & Opus4.6   & 74.1\% \\
\midrule
\multicolumn{3}{l}{\textit{Net gain vs majority vote (MV) and zero-shot (ZS) baselines (Qwen-35B traces)}} \\
Net Sonnet vs MV (Qwen-35B)     & & $+4.0$ pp \\
Net Sonnet vs ZS Son (Qwen-35B) & & $+4.0$ pp \\
Net Opus vs MV (Qwen-35B)       & & $+12.6$ pp \\
Net Opus vs ZS Opus (Qwen-35B)  & & $+1.5$ pp \\
\multicolumn{3}{l}{\textit{Net gain vs majority vote (MV) and zero-shot (ZS) baselines (OSS-20B traces)}} \\
Net Sonnet vs MV (OSS-20B)      & & $+17.6$ pp \\
Net Sonnet vs ZS Son (OSS-20B)  & & $+0.7$ pp \\
Net Opus vs MV (OSS-20B)        & & $+27.9$ pp \\
Net Opus vs ZS Opus (OSS-20B)   & & $-0.1$ pp \\
\midrule
HUBO prec (Sonnet, Qwen-35B) & & 129 W / \phantom{0}85 H \quad 60.3\% \\
HUBO prec (Opus, Qwen-35B)   & & \phantom{0}72 W / \phantom{0}55 H \quad 56.7\% \\
HUBO prec (Sonnet, OSS-20B)  & & \phantom{0}91 W / \phantom{0}83 H \quad 52.3\% \\
HUBO prec (Opus, OSS-20B)    & & \phantom{0}55 W / \phantom{0}56 H \quad 49.5\% \\
\bottomrule
\end{tabular}
\caption{MMLU-Pro law results ($n{=}1{,}101$).
HUBO optimisation significantly outperforms majority vote (MV) trace aggregation
and also provides measurable gains over zero-shot (ZS) frontier adjudication on this benchmark.
W refers to trace-driven wins, and H refers to HUBO hurting the outcome, as compared to ZS.
HUBO precision is defined in Eq.~\ref{eq:hubo_precision_def}.   
Precision > 0.5 means that whenever HUBO disagrees with the ZS frontier model,  HUBO improves accuracy in expectation.
The bolded row, EP-HUBO with the Qwen3.5-35B local model traces, is the best configuration in terms of accuracy for each frontier model beating both MV and ZS.}
\label{tab:mmlu_full}
\end{table}

\subsection{LEXam Results}
\label{sec:lexam}

We also evaluate EP-HUBO on LEXam~\cite{lexam2024} a 569-question
benchmark of Swiss and international law  with
8 option multiple-choice questions (A--H) across three legal domains: Interdisciplinary ($n{=}494$),
Private law ($n{=}63$), and Public law ($n{=}12$).  LEXam was released in
2024 and is sourced from a civil-law jurisdiction (Switzerland) with
limited English-language online presence relative to common-law sources.
Its 8-option format and European legal context are atypical of the
benchmarks frontier models are routinely evaluated on.  As such,  we consider LEXam as a low-contamination benchmark to complement MMLU-Pro law. On LEXam, zero-shot
Opus achieves only $66.8\%$ as compared to $74.2\%$ on MMLU-Pro law. This provides more potential benefit for the EP-HUBO optimisation to achieve.

On LEXam, we evaluate EP-HUBO using two trace generators: Qwen3.5-35B and
OSS-20B with 20 traces per question each.  This section summarises the LEXam results using simulated annealing (SA) on a classical computer to solve the HUBO optimisation problem. With Opus as adjudicator,
EP-HUBO with Qwen-35B traces reaches $71.9\%$ and with OSS-20B traces
$71.5\%$.

The two trace generators behave differently on LEXam.  Qwen-35B's traces
have low answer entropy: its MV baseline is already $69.1\%$, only $2.8$~pp
below HUBO+Opus.  OSS-20B's traces have  higher answer entropy: its MV
baseline is only $48.3\%$, with HUBO+Opus delivering $+23.2$~pp.  The two
trace generators thus illustrate two regimes, one
where MV is  strong and HUBO finds a small
margin, and the other where MV scatters across wrong answers
and HUBO recovers a large margin.

Against the zero-shot frontier baseline, both Opus configurations exceed
ZS Opus's $66.8\%$: Qwen-35B+Opus by $+5.1$~pp and OSS-20B+Opus by
$+4.7$~pp, with HUBO precisions of $68.8\%$ and $64.5\%$ respectively.

The most striking finding on LEXam concerns Sonnet4.6 baseline performance on LEXam.  
 We observed that
zero-shot Sonnet  returns option ``E'' on 499 of the 569 LEXam
questions, whereas the true answers are  distributed near-uniformly across the eight options (see Table~\ref{tab:lexam_gold_dist}). In particular, only $13\%$ of the  gold answers are ``E''. This position bias leads to a zero-shot   accuracy using Sonnet on LEXam of only $22.3\%$.

 Examining the Sonnet bias  by LEXam sub-area (Table~\ref{tab:e_bias_by_area}) shows that the collapse is essentially  in the Interdisciplinary subset which is entirely Swiss-Law content, where Sonnet picks ``E'' on $95.7\%$ of questions and  achieves  $14.8\%$ zero-shot accuracy. On the Private subset (Chinese and US Business Law, $n=63$), Sonnet's answer distribution is better spread out and zero-shot accuracy is  $76.2\%$. Opus4.6 exhibits a   smaller version of the same pattern, over-picking ``E'' by $+10$~pp on Interdisciplinary. This indicates that the trigger of the LLM position bias is the Swiss-Law corpus content rather than the multiple-choice format itself, and that Opus's broader pre-training largely absorbs it.
 
 EP-HUBO is able to recover a large share of these position bias errors: with
Qwen-35B traces, HUBO+Sonnet wins 126 questions vs.\ ZS Sonnet while
introducing only 11 hurts, for a HUBO precision of $92.0\%$ --- the highest
value of any configuration in this study.  With OSS-20B traces, HUBO+Sonnet
attains 80.4\% precision (86 wins, 21 hurts).   HUBO+Sonnet remains below the MV baseline ($42.5\%$ with
Qwen-35B traces, $33.7\%$ with OSS-20B) because MV draws on the underlying
trace ensembles, which are not E-biased.

\begin{table}[h]
\centering
\small
\begin{tabular}{llcccc}
\toprule
Method & Adj. & Overall & Interdisc. & Private & Public \\
       &      & (n=569) & (n=494)    & (n=63)  & (n=12) \\
\midrule
ZS (no traces/HUBO) & Sonnet4.6 & 22.3\% & 14.8\% & 76.2\% & 50.0\% \\
ZS (no traces/HUBO) & Opus4.6   & 66.8\% & 64.0\% & 85.7\% & 83.3\% \\
\midrule
\multicolumn{6}{l}{\textit{Trace generator: Qwen3.5-35B, 20 traces/question}} \\
MV (Qwen-35B)              & ---       & 69.1\% & 67.0\% & 87.3\% & 58.3\% \\
EP-HUBO (Qwen-35B)         & Sonnet4.6& 42.5\% & 37.0\% & 84.1\% & 50.0\% \\
\textbf{EP-HUBO (Qwen-35B)}& \textbf{Opus4.6} & \textbf{71.9\%} & \textbf{70.0\%} & \textbf{87.3\%} & 66.7\% \\
\midrule
\multicolumn{6}{l}{\textit{Trace generator: OSS-20B, 20 traces/question}} \\
MV (OSS-20B)               & ---       & 48.3\% & 45.3\% & 76.2\% & 25.0\% \\
EP-HUBO (OSS-20B)          & Sonnet4.6 & 33.7\% & 26.9\% & 87.3\% & 33.3\% \\
EP-HUBO (OSS-20B)          & Opus4.6   & 71.5\% & 69.8\% & 85.7\% & 66.7\% \\
\midrule
\multicolumn{6}{l}{\textit{Net gain vs majority vote (MV) and zero-shot (ZS) baselines (Qwen-35B traces)}} \\
Net Sonnet vs MV (Qwen-35B)     & & $-26.5$ pp & $-30.0$ pp & $-3.2$ pp & $-8.3$ pp \\
Net Sonnet vs ZS Son (Qwen-35B) & & $+20.2$ pp & $+22.3$ pp & $+7.9$ pp & $\phantom{+}0$ pp \\
Net Opus vs MV (Qwen-35B)       & & $+2.8$ pp  & $+3.0$ pp  & $\phantom{+}0$ pp & $+8.3$ pp \\
Net Opus vs ZS Opus (Qwen-35B)  & & $+5.1$ pp  & $+6.1$ pp  & $+1.6$ pp & $-16.7$ pp \\
\multicolumn{6}{l}{\textit{Net gain vs majority vote (MV) and zero-shot (ZS) baselines (OSS-20B traces)}} \\
Net Sonnet vs MV (OSS-20B)      & & $-14.6$ pp & $-18.4$ pp & $+11.1$ pp        & $+8.3$ pp  \\
Net Sonnet vs ZS Son (OSS-20B)  & & $+11.4$ pp & $+12.1$ pp & $+11.1$ pp        & $-16.7$ pp \\
Net Opus vs MV (OSS-20B)        & & $+23.2$ pp & $+24.5$ pp & $+9.5$ pp         & $+41.7$ pp \\
Net Opus vs ZS Opus (OSS-20B)   & & $+4.7$ pp  & $+5.9$ pp  & $\phantom{+}0$ pp & $-16.7$ pp \\
\midrule
HUBO prec (Sonnet, Qwen-35B) & & \multicolumn{4}{l}{126 W / 11 H \quad \textbf{92.0\%}} \\
HUBO prec (Opus, Qwen-35B)   & & \multicolumn{4}{l}{\phantom{0}53 W / 24 H \quad 68.8\%} \\
HUBO prec (Sonnet, OSS-20B)  & & \multicolumn{4}{l}{\phantom{0}86 W / 21 H \quad 80.4\%} \\
HUBO prec (Opus, OSS-20B)    & & \multicolumn{4}{l}{\phantom{0}60 W / 33 H \quad 64.5\%} \\
\bottomrule
\end{tabular}
\caption{EP-HUBO results on LEXam's Swiss/international law questions.  Wins (W) and hurts (H) are
computed against the zero-shot frontier LLM.  HUBO precision is
defined in Eq.~\ref{eq:hubo_precision_def}.  HUBO+Opus delivers a
$+5.1$~pp gain over the already-strong ZS Opus baseline with Qwen-35B traces
($+4.7$~pp with OSS-20B).  HUBO+Sonnet improves dramatically over the zero-shot
Sonnet, whose zero-shot accuracy is very low 22.3\%, due to an observed position bias.
Qwen-35B + Sonnet attains the highest HUBO precision in this study at 92.0\%
(126 wins for only 11 hurts versus ZS Sonnet).
The bolded row is the best configuration in terms of overall accuracy.}
\label{tab:lexam}
\end{table}

\begin{table}[h]
\centering
\small
\begin{tabular}{lcccccccc}
\toprule
Option & A & B & C & D & E & F & G & H \\
\midrule
\%  gold answers & 11 & 15 & 11 & 13 & 13 & 14 & 11 & 12 \\
\bottomrule
\end{tabular}
\caption{Distribution of answers across the eight options on the full LEXam set, rounded to integers. The distribution is near-uniform whereas Sonnet has a very strong 'E' bias.}
\label{tab:lexam_gold_dist}
\end{table}

\begin{table}[h]
\centering
\small
\begin{tabular}{lrrrrr}
\toprule
Area (n)                       & Gold \%E & Sonnet \%E & Sonnet acc & Opus \%E & Opus acc \\
\midrule
Interdisciplinary ($n=494$)    & 13.0     & 95.7       & 14.8       & 22.9     & 64.0     \\
Public ($n=12$)                & 25.0     & 66.7       & 50.0       & 25.0     & 83.3     \\
Private ($n=63$)               & 12.7     & 28.6       & 76.2       & 14.3     & 85.7     \\
\midrule
All ($n=569$)                  & 13.2     & 87.7       & 22.3       & 22.0     & 66.8     \\
\bottomrule
\end{tabular}
\caption{Zero-shot ``E'' rate and accuracy by LEXam sub-area for Sonnet4.6 and Opus4.6. The E-bias is concentrated in the Interdisciplinary (Swiss-Law) subset, where Sonnet's output is essentially degenerate.}
\label{tab:e_bias_by_area}
\end{table}


\subsection{Quantum Results on MMLU-Pro and LEXam }
\label{sec:quantum}

Table~\ref{tab:dirac_config} summarises the solver configuration used for all
Dirac-3 experiments reported in this section.  To validate these choices, we
swept the relaxation schedule over values $\{1, 2, 3, 4\}$ and increased the
number of samples to 10 on a held-out subset of 50 randomly selected questions
per benchmark; accuracy and HUBO precision were invariant across these
variations, so we retained the configuration in Table~\ref{tab:dirac_config}
for the full evaluation.

\begin{table}[h]
\centering
\small
\begin{tabular}{lc}
\toprule
\textbf{Parameter} & \textbf{Value} \\
\midrule
Number of samples     & 3 \\
Relaxation schedule   & 2 \\
Problem configuration & \texttt{normalized\_qudit\_hamiltonian\_optimization} \\
\bottomrule
\end{tabular}
\caption{Dirac-3 machine configuration used for all quantum experiments. Other relaxation schedules ($\{1, 3, 4\}$) and sample counts up to 10 were evaluated on a 50-question held-out subset; results were invariant, so this configuration was retained for the full evaluation.}
\label{tab:dirac_config}
\end{table}

The HUBO formulation in Eq.~\ref{eq:hubo} is a higher-order binary
optimisation problem that is hardware-agnostic.  While the results reported in
Sections~\ref{sec:results}.1--\ref{sec:results}.2 use classical
simulated annealing (Algorithm~\ref{alg:sa}) as the Phase~3 solver, in
this section we report the results when the Phase~3 optimisation is solved on the Dirac-3
photonic quantum machine.   Table~\ref{tab:quantum} summarises the results.

\begin{table}[h]
\centering
\small
\begin{tabular}{lllcccc}
\toprule
Benchmark & Trace gen & Adj. & Dirac-3 Acc. & $\Delta$(Quant$-$SA) & Dirac vs ZS  &  HUBO prec.\\
\midrule
MMLU-Pro law  & Qwen-35B & Opus 4.6   & 67.3\% & $-8.4$ pp & $-6.9$ pp & 33.5\% \\
MMLU-Pro law  & Qwen-35B & Sonnet 4.6 & 60.0\% & $-7.1$ pp & $-3.1$ pp & 44.1\% \\
MMLU-Pro law  & OSS-20B  & Opus 4.6   & 65.8\% & $-8.3$ pp & $-8.4$ pp & 28.7\% \\
MMLU-Pro law  & OSS-20B  & Sonnet 4.6 & 58.8\% & $-5.1$ pp & $-4.4$ pp & 40.5\% \\
LEXam         & Qwen-35B & Opus 4.6   & \textbf{72.9\%} & $+1.0$ pp & \textbf{$+6.2$ pp} & 71.1\% \\
LEXam         & Qwen-35B & Sonnet 4.6 & 42.0\% & $-0.5$ pp & $+19.7$ pp & \textbf{91.8\%} \\
LEXam         & OSS-20B  & Opus 4.6   & 71.2\% & $-0.3$ pp & $+4.4$ pp & 64.7\% \\
LEXam         & OSS-20B  & Sonnet 4.6 & 32.2\% & $-1.5$ pp & $+9.8$ pp & 75.5\% \\
\bottomrule
\end{tabular}
\caption{Results when the HUBO optimisation of Phase 3 is solved using the Dirac-3 photonic quantum machine, rather than  simulated annealing.  $\Delta$ (Quant $-$ SA) is the accuracy difference
quantum minus simulated annealing on a classical computer.  Bold Dirac-3 accuracy: the only quantum configuration whose  accuracy exceeds the  classical-SA configuration.  Bold HUBO precision: the highest precision in the table  which is so high because zero-shot Sonnet is biased on LEXam and itself has very low ZS accuracy.}
\label{tab:quantum}
\end{table}

On LEXam the quantum solver is comparable to classical
simulated annealing across both adjudicators: with Opus the
differences with respect to SA are $+1.0$ pp using Qwen-35B and $-0.3$ pp using OSS-20B as trace generator. With
Sonnet as  frontier adjudicator  the quantum results are slightly lower than the classical solver, at   $-0.5$ pp with Qwen-35B and $-1.5$ pp with OSS-20B.  With Sonnet adjudication on LEXam, the quantum solver also
recovers a  large share of the ZS Sonnet option-position bias:
Qwen-35B with Dirac-3 and Sonnet reaches $42.0\%$ accuracy with HUBO precision
91.8\%, comparable to the corresponding classical SA result.

On MMLU-Pro law, by contrast, the quantum solver underperforms classical SA
across both adjudicators by 5--8 percentage points and the Dirac-3
configurations all fall below their respective ZS frontier baselines.
HUBO precisions using the quantum solver are all below the
$\mathrm{prec}_H = 50\%$ break-even threshold.  The Dirac-3 runs still
substantially exceed majority vote, so the quantum solver is finding
signal in the HUBO landscape, but it is selecting fragment subsets the
adjudicator follows less reliably than the classical solver's selections
on MMLU-Pro.

Dirac-3 imposes a hard limit of 135 binary variables for third-order
polynomial objectives.  When the total number of fragments across all
answer-option pools for a given question exceeds this limit, we truncate
to the 135 highest-scoring fragments before submitting to the hardware.
Specifically, fragments are ranked by their diagonal HUBO weight
$h_i = \alpha \cdot r_i + \beta \cdot s_i + \gamma \cdot d_i$ (the
1-body relevance--specificity--distinctiveness composite from
Eq.~\ref{eq:hubo}), and the top 135 are retained; all 2-body and 3-body
interaction terms involving discarded fragments are dropped, and variable
indices are remapped accordingly.  The classical SA solver operates on
the full, untruncated fragment pool.  This asymmetry is a plausible
contributor to the accuracy gap: questions in
the benchmark datasets tend to generate larger and more diverse trace pools, so
truncation is more frequent and removes a greater fraction of the
interaction structure from the HUBO landscape before the quantum solver
sees it.


\section{Ablation Study}
\label{sec:ablation}

To isolate the contribution of the five EP-HUBO method components, and to
assess sensitivity to adjudicator strength, we report a small-scale
ablation study on two parallel 25-question subsets: the first 25 law
questions of MMLU-Pro (Table~\ref{tab:ablation}) and the first 25 English-language questions of
LEXam (Table~\ref{tab:ablation_lexam}). For both, traces are
generated using a small ollama locally-run Qwen3.5-9B at the denser rate of 100 traces per question,
as opposed to the 20 traces per question used in our main results tables.

Our ablation contrasts four EP-HUBO configurations against the
zero-shot (ZS) and majority-vote (MV) baselines.   EP-HUBO-base is a minimal  version of EP-HUBO without the full method
components: fragment re-extraction, category-specific $\lambda$
presets, dynamic $K$, distinctiveness scoring, adaptive contradiction
penalty; see Table~\ref{tab:p1p5}).
The last three configurations differ only in the Phase~4 frontier
adjudicator (Haiku 4.5, Sonnet 4.6, Opus 4.6, in order from smallest to largest model, respectively).
Table~\ref{tab:ablation} reports the MMLU-Pro results while the LEXam ablation results are reported in Table~\ref{tab:ablation_lexam}.

\begin{table}[h]
\centering
\small
\begin{tabular}{p{0.30\linewidth} p{0.62\linewidth}}
\toprule
Component & Description \\
\midrule
Fragment re-extraction & Re-extract evidence fragments from the raw trace text with tighter quality filters: minimum 10 words; $\geq 2$ capitalised entity tokens.\\
Category-specific $\lambda$ weights & Replace a single global weight vector with per-category weights for $(\alpha,\beta,\gamma,\lambda_{\mathrm{support}},\lambda_{\mathrm{contra}},\lambda_{\mathrm{coh}},\lambda_{\mathrm{under}})$, tuned per sub-area (LEXam public, criminal, private, interdisciplinary). \\
Dynamic $K$ & Choose the per-pool fragment budget $K$ from intra-pool Jaccard diversity: $K{-}1$ when pools are redundant ($d<0.30$), $K{+}1$ when pools are diverse ($d>0.60$), $K$ otherwise. \\
Distinctiveness scoring & Replace the 1-body ``confidence'' score (implicitly correlated with pool-vote frequency) by specificity and distinctiveness, decoupling fragment weight from majority signal. \\
Adaptive $\lambda_{\mathrm{contra}}$ & Boost the contradiction penalty when cross-pool vocabulary overlap is high, sharpening separation between pools that compete on the same concepts. \\
\bottomrule
\end{tabular}
\caption{The five method components in the full EP-HUBO that distinguish it from its base configuration in Table~\ref{tab:ablation}.}
\label{tab:p1p5}
\end{table}

\begin{table}[h]
\centering
\small
\begin{tabular}{llccccc}
\toprule
Method & Adj. & Acc.\ & vs MV & vs ZS & W / H & HUBO prec. \\
\midrule
ZS (no traces/HUBO) & Haiku 4.5  & 60.0\%  & --- & --- & --- & --- \\
ZS (no traces/HUBO) & Sonnet 4.6 & 72.0\%  & --- & --- & --- & --- \\
ZS (no traces/HUBO) & Opus 4.6   & 72.0\%  & --- & --- & --- & --- \\
\midrule
MV (Qwen-9B)               & ---        & 52.0\%  & --- & --- & --- & --- \\
EP-HUBO--base       & Sonnet 4.6 & 76.0\%  & $+24.0$ pp & $+4.0$ pp & 2 / 1 & 66.7\% \\
EP-HUBO (full)              & Haiku 4.5  & 64.0\%  & $+12.0$ pp & $+4.0$ pp & 3 / 2 & 60.0\% \\
EP-HUBO (full)              & Sonnet 4.6 & 80.0\%  & $+28.0$ pp & $+8.0$ pp & 3 / 1 & 75.0\% \\
EP-HUBO (full)   & \textbf{Opus 4.6}  & \textbf{84.0\% } & $+32.0$ pp & $+12.0$ pp & 4 / 1 & \textbf{80.0\%} \\
\bottomrule
\end{tabular}
\caption{Ablation study on the 25-question MMLU-Pro law set using as trace generator an ollama locally-run
Qwen3.5-9B with 100 traces per question.  The EP-HUBO-base row
  isolates the contribution of the full EP-HUBO method from its base version.  The  three last rows isolate the effect of
adjudicator strength LLM capacity.  ZS  are the
zero-shot baselines, computed on the same 25 questions.
W and H are HUBO trace-driven wins and HUBO hurts versus its
 ZS baseline.  The bolded row is the best
configuration.}
\label{tab:ablation}
\end{table}

Three observations can be drawn from Table~\ref{tab:ablation}.
First, the method-components ablation: switching on the five components of EP-HUBO
 with Sonnet held fixed  raises HUBO precision from $66.7\%$ to $75.0\%$. 
Second, adjudicator strength: with the EP-HUBO method components included, replacing
Sonnet with Opus pushes
HUBO precision to $80.0\%$, the highest of any configuration in this
study.  Third, the cheap-adjudicator configuration using  Haiku reaches only
$64\%$ accuracy, below both Sonnet and Opus configurations and below
the ZS Sonnet and  ZS Opus baselines but EP-HUBO with Haiku gains $+4$ pp over
its own ZS baseline at HUBO precision $60.0\%$. Thus, adjudicator pre-training capacity sets a
ceiling on what HUBO evidence can recover.  All four EP-HUBO
configurations clear the $\mathrm{prec}_H > 1/2$ threshold at
which HUBO wins exceed HUBO hurts in expectation. The ablation study suggests HUBO evidence is more useful when adjudicator models are stronger, which also mirrors the stronger gains over ZS we see on the full benchmarks with the larger frontier model Opus.

\paragraph{LEXam companion ablation.} Table~\ref{tab:ablation_lexam} repeats the
same ablation grid on the first 25 questions of LEXam-mcq-8-en, using the
identical Qwen3.5-9B trace generator and 100 traces per question.

\begin{table}[h]
\centering
\small
\begin{tabular}{llccccc}
\toprule
Method & Adj. & Acc.\ & vs MV & vs ZS & W / H & HUBO prec. \\
\midrule
ZS (no traces/HUBO) & Haiku 4.5  & 40.0\%  & --- & --- & --- & --- \\
ZS (no traces/HUBO) & Sonnet 4.6 & 20.0\%  & --- & --- & --- & --- \\
ZS (no traces/HUBO) & Opus 4.6   & 64.0\%  & --- & --- & --- & --- \\
\midrule
MV (Qwen-9B)        & ---        & 36.0\%  & --- & --- & --- & --- \\
EP-HUBO--base       & Sonnet 4.6 & 28.0\%  & $-8.0$ pp  & $+8.0$ pp  & 2 / 0 & 100.0\% \\
EP-HUBO (full)      & Haiku 4.5  & 44.0\%  & $+8.0$ pp  & $+4.0$ pp  & 3 / 2 & 60.0\% \\
EP-HUBO (full)      & Sonnet 4.6 & 28.0\%  & $-8.0$ pp  & $+8.0$ pp  & 3 / 1 & 75.0\% \\
EP-HUBO (full)      & \textbf{Opus 4.6}  & \textbf{72.0\%} & $+36.0$ pp & $+8.0$ pp  & 3 / 1 & \textbf{75.0\%} \\
\bottomrule
\end{tabular}
\caption{Ablation study on the first 25 questions of LEXam-mcq-8-en (English), using as trace generator
the same locally-run Qwen3.5-9B with 100 traces per question. Rows, columns and conventions mirror
Table~\ref{tab:ablation}. The bolded row is the best configuration.}
\label{tab:ablation_lexam}
\end{table}

\begin{table}[h]
\centering
\small
\begin{tabular}{lcccccccc}
\toprule
Option & A & B & C & D & E & F & G & H \\
\midrule
\%  gold answers & 12 & 12 & 0 & 28 & 12 & 20 & 12 & 4 \\
\bottomrule
\end{tabular}
\caption{Answer distribution  on  25-question LEXam ablation subset, rounded to integers. Sonnet's very strong 'E' bias is not supported by the ground truth  of 12\% true 'E' questions.}
\label{tab:lexam_gold_dist_25}
\end{table}

The LEXam ablation reproduces similar  findings to MMLU-Pro's, with the exception of the 'E' position bias issue in the LEXam baseline LLM results. The gold-answer distribution on the LEXam 25-question subset is shown in Table~\ref{tab:lexam_gold_dist_25}. 
Adjudicator LLM strength is again the dominant factor:
Opus reaches $72.0\%$ accuracy, $+8$~pp above its ZS baseline ($64.0\%$) and
$+36$~pp above the MV baseline; Haiku reaches $44.0\%$, $+4$~pp above ZS Haiku;
Sonnet is at $28.0\%$, well below ZS Opus and below MV again due to the 'E' answer position bias. HUBO precision with
Opus is $75.0\%$, the highest accuracy in the table but below the $80.0\%$ that
Opus achieves on MMLU-Pro, mirroring the smaller headline gain over ZS ($+8$~pp on
LEXam vs.\ $+12$~pp on MMLU-Pro). The method-components contrast is muted under
Sonnet: EP-HUBO--base and EP-HUBO (full) both yield $28.0\%$, because zero-shot
Sonnet's answer distribution on this subset is dominated by a position bias (20 of
its 25 answers are choice $`E'$), so the full method's refinements do not flip its
selection on the same questions. EP-HUBO with Sonnet does, however, recover three
HUBO wins over ZS Sonnet at HUBO precision $75\%$, indicating that the evidence
pools carry signal even when overall accuracy is constrained by the adjudicator's
idiosyncrasies. All four EP-HUBO configurations clear the
$\mathrm{prec}_H > 1/2$ threshold on LEXam as well.

\section{Conclusion and Discussion}
\label{sec:conclusion}

We introduced EP-HUBO, an evidence-pooling higher-order binary
optimisation approach for reasoning trace selection in
evidence-intensive legal domains. EP-HUBO uses a local
smaller model to generate multiple chain-of-thought traces, organises fragments into
answer-specific evidence pools, and applies a HUBO optimisation procedure to select a
compact, coherent, and distinctive subset of evidence for each
option before delegating final adjudication to a frontier model. By
deriving fragment weights from relevance, specificity, and distinctiveness
instead of co-occurrence frequency, the method  decouples
evidence selection from majority-vote signal, enabling minority but
well-supported hypotheses to override noisy majorities. Beyond accuracy, EP-HUBO exposes and partially mitigates frontier
idiosyncrasies. On LEXam, zero-shot Claude Sonnet~4.6 exhibits a
position bias, selecting choice `E' on 87.7\% of questions; EP-HUBO-selected evidence reduces this bias and yields up to $+20.2$~pp
gain over zero-shot Sonnet at HUBO precision $92.0\%$ using Qwen-35B reasoning traces.

Against the small-model baseline of majority vote (MV), EP-HUBO is very effective:
$+12.6$~pp on MMLU-Pro law and up to $+23.2$~pp on LEXam with Opus as
adjudicator.   Against the
frontier baseline (ZS), EP-HUBO delivers measurable gains on both legal benchmarks:
$+1.5$~pp on MMLU-Pro law and up to $+5.1$~pp on LEXam with Opus
(HUBO precision $56.7\%$ on MMLU-Pro law with Qwen-35B traces;
on LEXam, $68.8\%$ with Qwen-35B traces and $64.5\%$ with OSS-20B traces).
The gains are larger on LEXam, where the frontier model does not already
saturate accuracy; the contamination differential between MMLU-Pro and
LEXam (ZS Opus $74.2\%$ vs.\ $66.8\%$) bounds the improvement available
to any trace-augmented pipeline.  Quantum-combinatorial optimisation thus appears most valuable in
settings where frontier models have not already absorbed the benchmark
material---low-contamination domains and specialised
legal or regulatory subfields.

We also report results in which Phase~3 is solved on the Dirac-3 photonic
quantum machine instead of by  simulated
annealing on classical computers.  On LEXam, the
quantum solver matches or modestly exceeds classical SA: the strongest
quantum configuration in absolute accuracy is Qwen-35B + Opus at
$72.9\%$, $+1.0$~pp above the matched classical run and $+6.2$~pp above
the ZS Opus baseline.  With Sonnet adjudication, Qwen-35B + Dirac-3 on
LEXam attains a HUBO precision of $91.8\%$,
the highest precision of any quantum configuration we evaluate and
essentially tied with the classical-SA result on the same configuration.
On MMLU-Pro law, by contrast, the quantum solver underperforms classical SA
by 5--8 percentage points across both adjudicators and the Dirac-3
configurations fall below their respective ZS baselines, although they
still substantially exceed majority vote. A likely contributing factor is Dirac-3's
135-variable limit for third-order objectives: the dataset pools frequently exceed this
threshold, requiring truncation of the fragment interaction graph prior to hardware submission,
whereas the classical SA solver operates on the full untruncated HUBO.

One surprising result from this work was the LLM's bias  on  LEXam which is a reminder  that even 
frontier models can exhibit  priors that produce
catastrophic failures on out-of-distribution datasets and tasks. EP-HUBO
partially mitigates such priors but does not eliminate them.  In
regulated sector deployment, therefore, transparency and 
audit logs of  evidence, as well as human-in-the-loop review  remain important safeguards.

Several directions remain open. On the evidence side, our implementation uses 
single-LLM trace pools; extending EP-HUBO to multi-model trace mixtures is an interesting avenue for further study.
Another  direction of interest is to test EP-HUBO on
free-form question answering, code generation, and mathematical reasoning, domains which also
admit finite hypothesis spaces by clustering the model's outputs  into
canonical equivalence classes.  Regarding the reasoning fragment scoring, stepwise interactive selection could improve the feature space for the HUBO optimisation problem. Finally, the current formulation uses up to third-order interactions
among reasoning fragments; Dirac-3's native support for higher-order
polynomial objectives makes it a natural platform for exploring fourth- and
fifth-order HUBO terms, which could capture more complex fragment
interdependencies and potentially yield further accuracy gains.

\section*{Acknowledgments}
The authors would like to thank Quantum Computing Inc. through their in-kind contributions.

\bibliographystyle{plain}

\appendix

\section{Prompt Templates}
\label{app:prompts}

We reproduce the prompt templates used in each phase.  Variable
placeholders are written in \texttt{<angle\_brackets>}.  All prompts
are released verbatim with the code repository; the templates here are
abbreviated for space.

\paragraph{Phase 1 (trace generation).}
Each trace is generated with the following user-message template,
preceded by a system message that pins the local model's role.

\begin{verbatim}
SYSTEM:
You are a careful reasoner for multiple-choice questions.
For each question, list 3-5 key facts, then state your final answer
on a line of the form: FINAL ANSWER: <letter>.

USER:
Question: <q.text>
Options:
  A) <q.choices[0]>
  ...
  L) <q.choices[L-1]>
Reason carefully, then output FINAL ANSWER: <letter>.
\end{verbatim}

\paragraph{Phase 2 (1-body fragment scoring).}
The local model is queried with the question and a single candidate
fragment; the output is a JSON object with three real-valued scores
in $[0,1]$.

\begin{verbatim}
SYSTEM:
You score a single candidate evidence fragment along three
dimensions. Output strict JSON with keys "relevance",
"specificity", and "distinctiveness" each in [0,1].

USER:
Question: <q.text>
Candidate evidence: <f.text>
Rate relevance, specificity, and distinctiveness.
\end{verbatim}

\paragraph{Phase 2 (pairwise and triplet scoring).}
Pairwise prompts request \texttt{support} and \texttt{contradict}
scores for two fragments simultaneously; triplet prompts request
\texttt{coherent} and \texttt{undermine} for three.  Both follow the
same JSON-output pattern.

\paragraph{Phase 4 (frontier adjudication).}
The frontier model receives all selected evidence labelled by its
candidate-label pool.

\begin{verbatim}
SYSTEM:
You are an adjudicator for multiple-choice questions in <domain>.
You will be given the question, the answer options, and a small set
of evidence fragments grouped by which option each fragment supports.
Choose the single option most strongly supported by the evidence.
Reply with the answer letter only, in parentheses, e.g. (A).

USER:
Question: <q.text>
Options:
  A) <q.choices[0]>
  ...

=== Evidence supporting (A) ===
  1. <fragment_A_1>
  ...

=== Evidence supporting (B) ===
  ...

Which answer is most strongly supported?
\end{verbatim}

\paragraph{Phase 4 (zero-shot baseline).}
The ZS baseline uses the same Phase~4 system message with the evidence
blocks omitted (the user message contains only the question and
options).

\section{Additional Details}
\label{app:hyperparams}

 Hyperparameters used throughout the paper are listed in
Table~\ref{tab:hyper}.

\begin{table}[h]
\centering
\small
\begin{tabular}{lll}
\toprule
Symbol & Meaning & Value(s) \\
\midrule
$N$ & traces per question (full scale) & 20 \\
$\tau_{\min}$ & minimum traces for populated pool & 3 \\
$K_{\text{base}}$ & base cardinality for SA selection & 3 \\
$K$ & per-pool $K_{\text{base}} \pm 1$ based on Jaccard diversity & $\{2,3,4\}$ \\
$\alpha, \beta, \gamma$ & 1-body coefficients & see Table~\ref{tab:presets} \\
$\lambda_{\text{supp}}, \lambda_{\text{contra}}$ & pairwise coefficients & see Table~\ref{tab:presets} \\
$\lambda_{\text{coh}}, \lambda_{\text{und}}$ & triplet coefficients & 0.50 / 0.50 (default) \\
$T_0$ & initial SA temperature & 1.0 \\
$\eta$ & SA cooling rate per step & 0.995 \\
$M$ & SA steps per pool & 2000 \\
Phase 4 \texttt{max\_tokens} & frontier reply budget & 16 \\
Phase 1 temperature & local-LLM sampling temperature & 0.8 \\
Phase 2 temperature & local-LLM scoring temperature & 0.0 \\
\bottomrule
\end{tabular}
\caption{Hyperparameter.  Category-specific
$(\alpha,\beta,\gamma,\lambda_{\text{supp}},\lambda_{\text{contra}},\lambda_{\text{coh}})$
values are in Table~\ref{tab:presets}.}
\label{tab:hyper}
\end{table}

\paragraph{Proof of Theorem 4.4} \begin{proof}
Each disagreement is a Bernoulli$(\mathrm{prec}_H)$ trial: either HUBO
wins or HUBO hurts.  Hoeffding's inequality applied to the empirical
mean of $n$ such trials yields Eq.~\ref{eq:hoeffding}.
\end{proof}

\end{document}